\definecolor{Highlight}{RGB}{215, 232, 242}
\newcommand{\highlight}[1]{\colorbox{Highlight}{$\displaystyle #1$}}
\newcommand{\highlighttext}[1]{\colorbox{Highlight}{#1}}
\definecolor{HighlightHeavy}{RGB}{30, 100, 150}
\newcommand{\qhigh}[1]{\textcolor{HighlightHeavy}{#1}}
\newcommand{\qhighMeth}{\qhigh{\Meth{}}}
\newtheorem{theorem}{Theorem}
\newtheorem{lemma}{Lemma}
\newtheorem{fact}{Fact}
\newtheorem{definition}{Definition}
\newtheorem{remark}{Remark}
\newtheorem{assumption}{Assumption}
\newcommand{\defeq}{:=}
\newcommand{\eqdef}{=:}
\newcommand{\norm}[1]{\left\lVert#1\right\rVert}
\newcommand{\normop}[1]{\left\lVert#1\right\rVert_{\textup{op}}}
\newcommand{\normtr}[1]{\left\lVert#1\right\rVert_{\textup{tr}}}
\newcommand{\inprod}[2]{\left\langle#1, #2\right\rangle}
\newcommand{\eps}{\epsilon}
\newcommand{\lam}{\lambda}
\newcommand{\sig}{\sigma}
\newcommand{\R}{\mathbb{R}}
\newcommand{\N}{\mathbb{N}}
\newcommand{\diag}[1]{\textbf{\textup{diag}}\left(#1\right)}
\newcommand{\half}{\frac{1}{2}}
\newcommand{\0}{\mathbf{0}}
\newcommand{\1}{\mathbf{1}}
\newcommand{\ind}{\mathbb{I}}
\newcommand{\E}{\mathbb{E}}
\newcommand{\Var}{\textup{Var}}
\newcommand{\ball}{\mathbb{B}}
\newcommand{\dd}{\textup{d}}
\newcommand{\Par}[1]{\left(#1\right)}
\newcommand{\Brack}[1]{\left[#1\right]}
\newcommand{\Brace}[1]{\left\{#1\right\}}
\newcommand{\Abs}[1]{\left|#1\right|}
\newcommand{\md}{\mathbf{D}}
\newcommand{\mg}{\mathbf{G}}
\newcommand{\mm}{\mathbf{M}}
\newcommand{\mo}{\mathbf{O}}
\newcommand{\mx}{\mathbf{X}}
\newcommand{\vg}{\mathbf{g}}
\newcommand{\vh}{\mathbf{h}}
\newcommand{\vm}{\mathbf{m}}
\newcommand{\vs}{\mathbf{s}}
\newcommand{\vu}{\mathbf{u}}
\newcommand{\vv}{\mathbf{v}}
\newcommand{\vx}{\mathbf{x}}
\newcommand{\vy}{\mathbf{y}}
\newcommand{\vz}{\mathbf{z}}
\newcommand{\vdelta}{\boldsymbol{\delta}}
\newcommand{\calF}{\mathcal{F}}
\newcommand{\calH}{\mathcal{H}}
\newcommand{\calK}{\mathcal{K}}
\newcommand{\calS}{\mathcal{S}}
\newcommand{\calU}{\mathcal{U}}
\newcommand{\hvm}{\widehat{\vm}}
\newcommand{\tvm}{\widetilde{\vm}}
\newcommand{\hvv}{\widehat{\vv}}
\newcommand{\nbatch}{n_\mathrm{batch}}
\DeclareMathOperator*{\sgn}{sgn}
\newcommand{\lion}{\textsc{Lion}}
\newcommand{\lionk}{\textsc{Lion}-$\calK$}
\newcommand{\adam}{\textsc{Adam}}
\newcommand{\adamw}{\textsc{AdamW}}
\newcommand{\muon}{\textsc{Muon}}
\newcommand{\Meth}{\texttt{CWD}}
\newcommand{\METHOD}{Cautious Weight Decay}
\newcommand{\method}{cautious weight decay}
\newcommand{\Method}{Cautious weight decay}
\def\bb#1\ee{\begin{align*}#1\end{align*}}
\def\bba#1\eea{\begin{align}#1\end{align}}
\def\bbb#1\eee{\begin{align}#1\end{align}}
\definecolor{Link}{RGB}{54, 125, 167}
\title{\raisebox{-0.4\height}{\includegraphics[width=1cm]{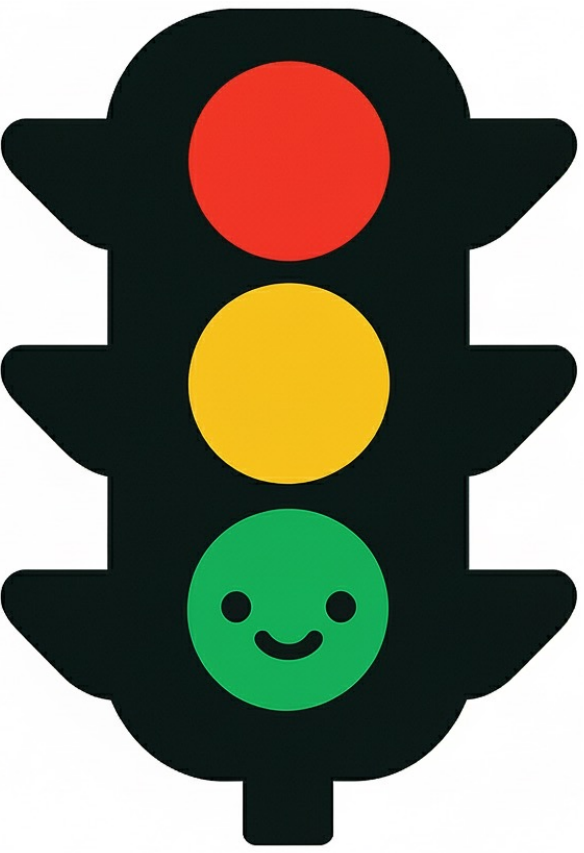}}~~\METHOD{}}
\author{
  \begin{tabular}{cccc}
      Lizhang Chen\textsuperscript{\textasteriskcentered\dag\ddag} &
      Jonathan Li\textsuperscript{\textasteriskcentered\dag} &
      Kaizhao Liang\textsuperscript{\dag} &
      Baiyu Su\textsuperscript{\dag}
  \end{tabular}\\\\
  \begin{tabular}{ccccc}
      Cong Xie &
      Nuo Wang Pierse\textsuperscript{\ddag} &
      Chen Liang\textsuperscript{\ddag} &
      Ni Lao\textsuperscript{\ddag} &
      Qiang Liu\textsuperscript{\dag}
  \end{tabular}
}
\date{}
\begin{document}
\pagenumbering{gobble}
\maketitle
\thispagestyle{firstpage}
\enlargethispage{3\baselineskip}
\begingroup
\renewcommand{\thefootnote}{\fnsymbol{footnote}}
\footnotetext[1]{Equal contribution by LC and JL. Correspondence: \texttt{lzchen, jli@cs.utexas.edu}}
\footnotetext[2]{University of Texas at Austin}
\footnotetext[3]{Google}
\endgroup

\begin{abstract}
We introduce \underline{C}autious \underline{W}eight \underline{D}ecay (\texttt{CWD}), a one-line, optimizer-agnostic modification that applies weight decay only to parameter coordinates whose signs align with the optimizer update. Unlike standard decoupled decay, which implicitly optimizes a regularized or constrained objective, \texttt{CWD} preserves the original loss and admits a bilevel interpretation: it induces sliding-mode behavior upon reaching the stationary manifold, allowing it to search for locally Pareto-optimal stationary points of the unmodified objective. In practice, \texttt{CWD} is a drop-in change for optimizers such as \textsc{AdamW}, \textsc{Lion}, and \textsc{Muon}, requiring no new hyperparameters or additional tuning. For language model pre-training and ImageNet classification, \texttt{CWD} consistently improves final loss and accuracy at million- to billion-parameter scales.

\end{abstract}
\pagenumbering{arabic}

\section{Introduction}
\label{sec:intro}

\begin{algorithm}[H]
\caption{\highlighttext{\METHOD{} (\Meth{})}}
\label{alg:cwd}
\footnotesize
\begin{algorithmic}
\STATE \textbf{given}
    parameters $\vx_t$, optimizer update $\vu_t$, learning rates $\eta_t>0$, weight decay coefficient $\lam\ge 0$
\STATE $\vx_{t+1}\gets\vx_t-\eta_t\left(\vu_t+ \highlight{\lam\ind(\vu_t\vx_t\ge\0)\vx_t}\right)$ \COMMENT{entrywise multiplication}
\end{algorithmic}
\end{algorithm}
\vspace{-0.1cm}

\textbf{Optimization algorithms} lie at the core of modern deep learning, shaping not only convergence speed but also  training stability and generalization ability across domains such as natural language processing and computer vision.
As models and datasets scale, traditional methods such as stochastic gradient descent (SGD) and SGD with momentum \citep{SutskeverMDH13} encounter limitations, including \textit{slow convergence in non-convex landscapes, sensitivity to learning rate schedules, and poor robustness to sparse or noisy gradients} \citep{ScamanM20, ZhaoMBVK25}. In response, a wide range of alternatives have emerged, including adaptive gradient methods \citep{DuchiHS11, KingmaB15}, approximate second-order approaches \citep{MartensG15, GuptaKS18, YaoGSMKM21, LiuLHLM24, NguyenCLL24, WenHML25}, and specialized algorithms for extreme training regimes \citep{LiangLCL24, LuoYL24, XieZLLY24, HuangZJLW025, ZhangCLD0K0L025}.

Among these advances, \textbf{decoupled weight decay} \citep{LoshchilovH19} has proven especially influential. In its general form, decoupled weight decay augments any optimizer update $\vu_t$ with a decay term applied directly to the parameters, i.e. \[ \vx_{t+1}\gets\vx_t-\eta_t(\vu_t+\lam\vx_t), \qquad \vu_t = \operatorname{OptimizerUpdate}(\vx_t). \]

This technique improves training stability and generalization by preventing the adaptive learning rates from interfering with regularization, as exemplified by the success of \adamw{} in large model training \citep{BrownMRSKDNSSAA20, DosovitskiyBKWZ21, TouvronMSAABBB23} and the subsequent development of state-of-the-art optimizers such as \lion{} \citep{ChenLHRW0DLHLL23}, \lionk{} \citep{ChenLLL24}, and \muon{} \citep{JordanJBYCNB24, LiuSY25}.

However, decoupled weight decay \textit{remains agnostic to the directional alignment between the optimizer update and the parameters}, which may hurt performance when they conflict. Intuitively, when the update $\vu_t$ and parameters $\vx_t$ point in the same direction for a given dimension, weight decay acts as a regularizer that improves stability; however, when their directions differ, applying decay actively resists beneficial movement toward the optimum. Furthermore, decoupled weight decay has been shown to implicitly impose regularization terms on the objective function \citep{ChenLLL24, XieL24}, which corresponds to parameter norm constraints for \adamw{}, \lion{}, and \muon{}.

\begin{figure}[t]
\vspace{-1.5\baselineskip}
\centering

\begin{subfigure}[t]{0.334\linewidth}
  \captionsetup{position=top}
  \includegraphics[width=\linewidth]{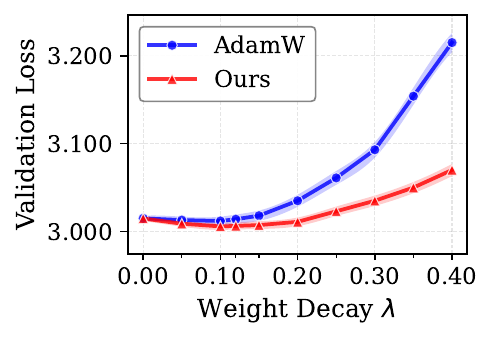}
  \label{fig:adamw_wd}
\end{subfigure}\hfill
\begin{subfigure}[t]{0.307\linewidth}
  \captionsetup{position=top}
  \includegraphics[width=\linewidth]{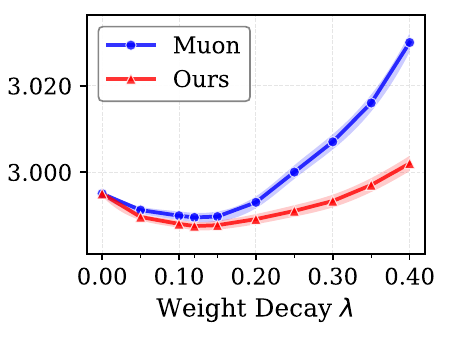}
  \label{fig:muon_wd}
\end{subfigure}\hfill
\begin{subfigure}[t]{0.307\linewidth}
  \captionsetup{position=top}
  \includegraphics[width=\linewidth]{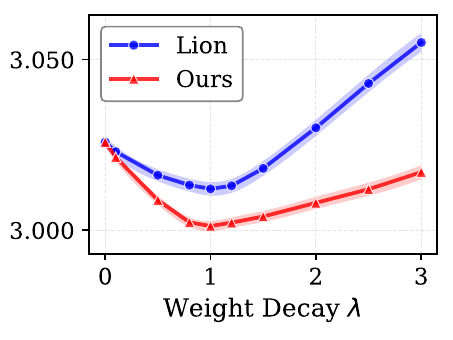}
  \label{fig:lion_wd}
\end{subfigure}
\vspace{-1.75\baselineskip}
\caption{ Final validation loss vs.\ weight decay coefficient $\lambda$ for 338M models trained on C4 under Chinchilla scaling. Our approach (red) achieves lower final loss than standard weight decay (blue) while preserving the optimizer-specific optimum in $\lambda$. For each optimizer (\adamw{}, \lion{}, \muon{}), both methods use the same hyperparameters.}
\label{fig:wd_loss}
\vspace{-4mm}
\end{figure}

\begin{wrapfigure}{r}{0.435\textwidth}
  \centering
  \vspace{-1.5\baselineskip} 
  \includegraphics[width=\linewidth]{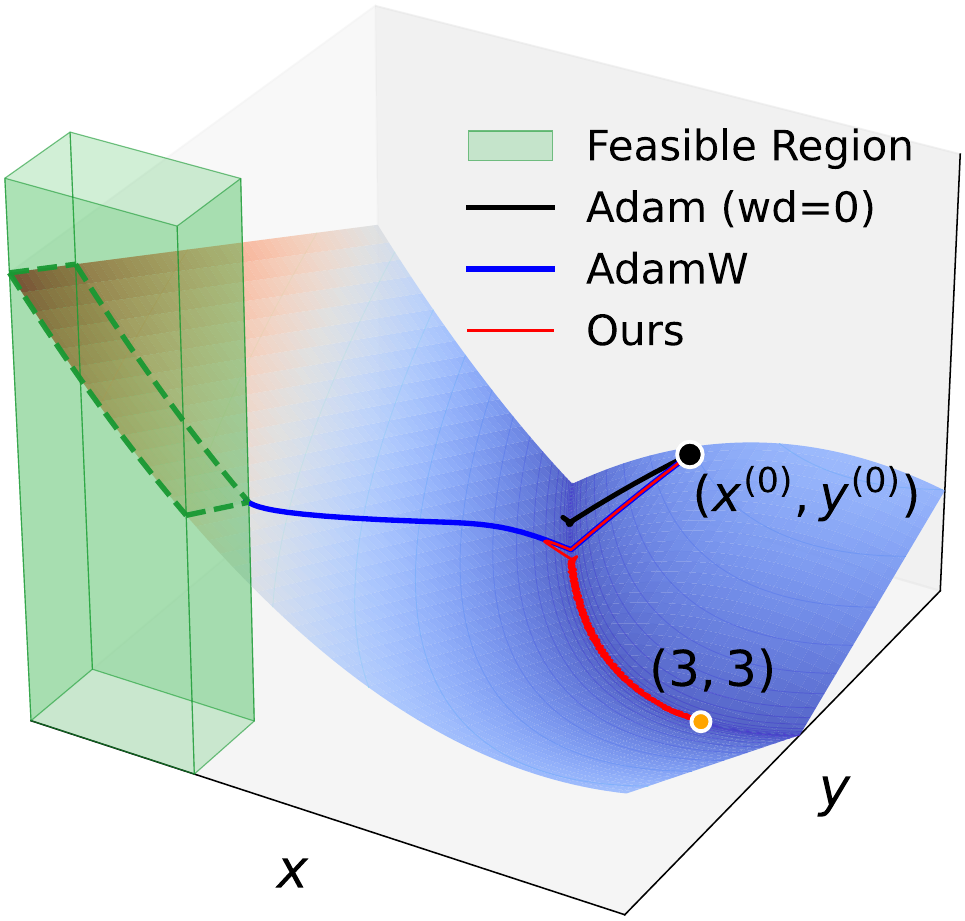}
  \caption{Trajectories of \adam{}, \adamw{}, and \adam{} + \Meth{} on a toy example. \adam{} halts at a minimizer, while \adamw{} minimizes the objective within a constrained region (green). In contrast, \adam{} + \Meth{} exhibits sliding mode dynamics within the minimizer manifold.}
  \label{fig:3d}
  \vspace{-1.5\baselineskip}
\end{wrapfigure}

In light of these limitations, we propose a simple refinement: {\method{} (\Meth{})}, in which decay is applied \emph{only} in dimensions where the update and parameter signs align (Algorithm~\ref{alg:cwd}). Our main contributions are as follows. 

$\bullet$ We introduce \method{}, a sign-selective extension of decoupled decay that applies weight decay only when the parameters and update align. Our technique can be implemented as a one-line modification without introducing additional hyperparameters compared to standard decoupled decay.

$\bullet$ We use Lyapunov analysis to show that standard optimizers (SGD(M), \lionk{}, \adam{}) with \method{} are asymptotically stable and unbiased, in the sense that they optimize the original loss rather than a regularized surrogate. The regularization effect of \method{} instead becomes a bilevel objective of finding locally Pareto-optimal points within the stationary manifold (Figure~\ref{fig:3d}). Furthermore, we show a convergence rate for discrete-time \adam{} with \method{} in the smooth nonconvex setting under additional assumptions.

$\bullet$ In language modeling \citep{OLMo25, Kamath25} and ImageNet classification \citep{DengDSLL009}, we observe that \method{} generally accelerates convergence and lowers final validation loss for \adamw{}, \lion{}, and \muon{} (e.g., Figure~\ref{fig:wd_loss}). These improvements translate into higher zero-shot accuracy on standard benchmarks from 338M to 2B parameters and across architectures without retuning baseline settings ($\approx$20,000 NVIDIA H100 HBM3-80GB GPU hours for all experiments). 

\section{Background and Motivation}
\label{sec:background}

\subsection{Decoupled weight decay}
Gradient-based optimizers with decoupled weight decay can be characterized by the update rule
\begin{equation}\label{eq:decoupled}
\vx_{t+1}=(1-\eta_t\lam)\vx_t-\eta_t\vu_t,
\end{equation}
where $\vu_t\defeq\calU(\vx_t,\vg_1,\dots,\vg_t,t)$ is an adaptive, often sign-normalized update vector constructed from first and second-moment estimates (e.g., momentum buffers, diagonal preconditioners), $\eta_t>0$ is the learning rate, and $\lam\geq0$ is the decoupled weight decay coefficient. This framework encapsulates a wide range of standard optimizers for machine learning, including \adamw{} and \lionk{}.

\paragraph{\adamw{}.} The update vector is given by $\vu_t=\md_t^{-1}\hvm_t$, where $\md_t$ is a diagonal preconditioner and $\hvm_t$ is bias-corrected first-moment estimate. Explicitly,
\[ \hvm_t=\frac{\beta_1\vm_{t-1}+(1-\beta_1)\vg_t}{1-\beta_1^t},\quad\hvv_t=\frac{\beta_2\vv_{t-1}+(1-\beta_2)\vg_t^2}{1-\beta_2^t},\qquad\md_t=\diag{\sqrt{\hvv_t}+\eps\1}, \] where $\beta_1$ and $\beta_2$ are momentum coefficients and $\eps$ is a numerical stability constant.
\paragraph{\lionk{}.} Given a convex function $\calK$, the update vector $\vu_t$ is a momentum-filtered step that is preconditioned using a subgradient, i.e. \[ \vm_t=\beta_2\vm_{t-1}-(1-\beta_2)\vg_t,\quad\tvm_t=\beta_1\vm_{t-1}-(1-\beta_1)\vg_t,\quad\vu_t=-\nabla\calK(\tvm_t), \] where $\beta_1$ and $\beta_2$ are momentum coefficients and $\nabla\calK$ is a subgradient of $\calK$. Examples include \lion{} when $\calK=\norm{\cdot}_1$ and \muon{} when $\calK=\normtr{\cdot}$, where $\normtr{\cdot}$ denotes the nuclear norm when the parameters are treated as a matrix.

\subsection{Implicit regularization effects of weight decay}
\label{ssec:implicit}

In general, the application of decoupled weight decay imposes a certain regularization or constraint effect on the objective function, where the specific effect depends on the choice of $\vu_t$. For example, SGD with decoupled weight decay is exactly SGD on an $\ell_2$-regularized objective. To see the equivalence, let $f:\R^d\to\R$ be differentiable and consider the regularized variant $\widehat{f}(\vx)\defeq f(\vx)+\frac{\lambda}{2}\norm{\vx}_2^2.$
A single SGD step on $\widehat{f}$ with learning rate $\eta_t>0$ yields the update
\[
  \vx_{t+1}=\vx_t-\eta_t(\nabla f(\vx_t)+\lam\vx_t)=(1-\eta_t\lam)\vx_t-\eta_t\nabla f(\vx_t),
\]
which is precisely the decoupled weight decay update given by \eqref{eq:decoupled}.

Given a convex function $\calK$ with subgradient $\nabla\calK$ and convex conjugate $\calK^*$, suppose the iterates of \lionk{} converge to a fixed point $(\vx^\star,\vm^\star,\tvm^\star)$. Then the moment estimators stabilize so that $\vm^\star=\tvm^\star=-\nabla f(\vx^\star)$, and the fixed-point condition yields $-\nabla\calK(-\nabla f(\vx^\star))+\lam\vx^\star=\0$. Rearranging and using the identity $(\nabla\calK)^{-1}=\nabla\calK^*$, we obtain $\nabla f(\vx^\star)+\nabla\calK^*(\lam\vx^\star)=\0$, where the left-hand side is the gradient of the function \[
  \widehat{f}(\vx)\defeq f(\vx)+\frac{1}{\lam}\calK^*(\lam\vx).
\] This suggests that \lionk{} optimizes the regularized objective $\widehat{f}$, an observation made by~\cite{ChenLLL24}. In the special cases of \lion{} and \muon{}, $\calK^*$ is the $0$-$\infty$ indicator function of a dual norm ball, corresponding to the constrained optimization problems \[ \min_{\vx\in\R^d}f(\vx)\quad\text{s.t.}\quad\norm{\vx}_\infty\leq\frac{1}{\lam}\qquad\text{and}\qquad\min_{\mx\in\R^{n\times m}}f(\mx)\quad\text{s.t.}\quad\normop{\mx}\leq\frac{1}{\lam}, \] respectively, where $\normop{\cdot}$ is the spectral norm when the parameters are treated as a matrix.

A similar analysis for \adamw{} suggests that it solves the box-constrained problem of minimizing $f(\vx)$ such that $\norm{\vx}_\infty\leq\frac{1}{\lam}$, but convergence cannot be established due to the lack of a Lyapunov function. For more discussion, see Appendix~\ref{app:fixed} and \cite{XieL24}.

While \adamw{} and \lionk{} are practically strong, they implicitly optimize a regularized surrogate that is dependent on the weight decay coefficient $\lam$. This motivates the development of a mechanism that maintains the beneficial effects of decoupled weight decay (e.g. regularization, training acceleration) while optimizing the original objective.

\section{\METHOD{}}
\label{sec:method}

\Method{} (\Meth{}) modifies the update rule~\eqref{eq:decoupled} as \[ \vx_{t+1}=\vx_t-\eta_t(\vu_t+\lam\ind(\vu_t\odot\vx_t\geq\0)\odot\vx_t), \] where $\odot$ denotes entrywise multiplication.\footnote{Throughout the paper, when it is clear from context, we also drop $\odot$ and write $\vv \odot \vx = \vv \vx$ for simplicity.} As a one-line modification, \method{} is implementation-trivial and universally compatible with gradient-based optimization algorithms. Theoretically, \method{} also exhibits the following behavior.

    $\bullet$ \textbf{Unbiased optimization}, in the sense that every accumulation point $\vx^\star$ of the trajectory satisfies $\nabla f(\vx^\star)=\0$ under the same convergence conditions required of the base optimizer without weight decay. In over-parameterized deep models, the set of stationary points is typically a union of connected submanifolds rather than isolated points. Consequently, the $\omega$-limit set of the trajectory is contained in some stationary manifold, and the iterates eventually remain arbitrarily close to it.
    
    $\bullet$ \textbf{Sliding mode dynamics} within the stationary manifold, where \method{} allows the trajectory to traverse along the manifold until it cannot decrease the parameter magnitudes in every coordinate. In other words, \method{} steers the trajectory towards a local Pareto front of the stationary manifold under the ordering that prioritizes smaller parameter magnitudes.

\subsection{Convergence to the stationary manifold}
\label{sec:unbiasedopt}

We construct Lyapunov functions for the continuous-time limits of several standard optimizers equipped with \method{}. A \emph{Lyapunov function} is a lower bounded function with nonpositive derivative that is used to certify the stability of systems of differential equations.

Consider the continuous-time dynamics of SGD with \method{} \[ \dot{\vx}_t=-\nabla f(\vx_t)-\lam\ind(\nabla f(\vx_t)\vx_t\geq\0)\vx_t. \] This ODE has the Lyapunov function $\calH(\vx)=f(\vx)$, since $\calH$ is lower bounded and
\[ \frac{\dd\calH}{\dd t}=\inprod{\nabla f(\vx_t)}{-\nabla f(\vx_t)-\lam\ind(\nabla f(\vx_t)\vx_t\geq\0)\vx_t}=-\norm{\nabla f(\vx_t)}_2^2-\lam\norm{(\nabla f(\vx_t)\vx_t)^+}_1\leq0, \]
where $(\cdot)^+\defeq\max(0,\cdot)$. LaSalle’s invariance principle \citep{LaSalle60} states that the accumulation points of any trajectory lie within the union of trajectories $\vz_t$ that satisfy $\tfrac{\dd}{\dd t}\calH(\vz_t)=0$ for all $t \geq 0$. Consequently, we conclude that SGD with \method{} produces trajectories that approach the stationary set $\{\vx \mid \nabla f(\vx)=\0\}$ of the original loss. This holds because cautious weight decay is applied only in a secondary fashion and is automatically deactivated whenever it conflicts with the main objective, thereby ensuring that the loss landscape remains unbiased.

Beyond the simple case of SGD, the same Lyapunov-type argument can be extended to more sophisticated algorithms such as SGDM, \lionk{}, and \adam{}. In each case, \method{} still minimizes the original objective without introducing explicit bias, but a key difficulty lies in constructing appropriate Lyapunov functions. Table~\ref{tab:compare} summarizes the Lyapunov functions of several major optimizers with \method{}, and detailed derivations are provided in Appendix~\ref{app:lyapunov}. By applying LaSalle’s invariance principle, we can show that the momentum-based algorithms in Table~\ref{tab:compare} 
converge to the stationary set of the original objective, together with vanishing momentum:
\[
\{(\vx,\vm)\mid \nabla f(\vx)=\0,\ \vm=\0\}.
\]

\begin{table}[t!]
\centering
\small
\caption{Comparison of the continuous-time dynamics of different optimizers. SGDM represents SGD with momentum. \lionk{} includes \lion{} ($\calK=\norm{\cdot}_1$) and \muon{} ($\calK=\normtr{\cdot}$) as special cases. 
$f:\R^d\to\R$ is assumed to be differentiable and lower bounded by $f^\star$.}
\setlength{\tabcolsep}{4pt}
\renewcommand{\arraystretch}{1.22}
\begin{tabularx}{\linewidth}{@{} l >{\raggedright\arraybackslash}X @{\hspace{0pt}} >{\raggedright\arraybackslash}X @{}}
\toprule
\textbf{Optimizer} & \textbf{Continuous-time dynamics} & \textbf{Lyapunov function} \\
\midrule
SGD + \qhighMeth &
\(\dot{\vx}_t = -\nabla f(\vx_t) - \qhigh{\lam\ind(\nabla f(\vx_t)\vx_t\geq\0)\vx_t}\) &
\(\calH(\vx)=f(\vx)\) \\
\cmidrule(lr){1-3}
SGDM + \qhighMeth &
\(\displaystyle
\begin{aligned}[t]
\dot{\vx}_t &= -\vm_t - \qhigh{\lam\ind(\vm_t\vx_t\geq\0)\vx_t} \\
\dot{\vm}_t &= \beta(\nabla f(\vx_t)-\vm_t)
\end{aligned}\) &
\(\calH(\vx,\vm)=\beta f(\vx)+\half\norm{\vm}_2^2+\qhigh{\lam\norm{(\vm\vx)^+}_1}\) \\
\cmidrule(lr){1-3}
\(\displaystyle
\begin{aligned}[t]
&\textsc{Lion-}\calK\text{ +  } \qhighMeth \\
\end{aligned}\) &
\(\displaystyle
\begin{aligned}[t]
\dot{\vx}_t &= \nabla\calK(\vm_t) - \qhigh{\lam\ind(\vm_t\vx_t\leq\0)\vx_t} \\
\dot{\vm}_t &= -\alpha\nabla f(\vx_t) - \gamma\vm_t
\end{aligned}\) &
\(\calH(\vx,\vm)=\alpha f(\vx)+\calK(\vm)+\qhigh{\lam\norm{(-\vm\vx)^+}_1}\) \\
\cmidrule(lr){1-3}
\adam{} + \qhighMeth &
\(\displaystyle
\begin{aligned}[t]
\dot{\vx}_t &= -\frac{\alpha_t\vm_t}{\vh_t} - \qhigh{\lam\ind(\vm_t\vx_t\geq\0)\vx_t} \\
\dot{\vm}_t &= \alpha(\nabla f(\vx_t)-\vm_t) \\
\dot{\vv}_t &= \gamma(\nabla f(\vx_t)^2-\vv_t)
\end{aligned}\) &
\(\displaystyle\!\!\!\!\!\!\!\!\!\!\!\!\!\calH_t(\vx,\vm,\vh)=\alpha f(\vx)+\norm{\frac{\alpha_t\vm^2}{2\vh}}_1+\qhigh{\lam\norm{(\vm\vx)^+}_1}\) \\
\bottomrule
\end{tabularx}
\vspace{2pt}
\begin{minipage}{\linewidth}
\footnotesize
\textit{Notation.} We drop $\odot$ for simplicity. $\alpha_t\defeq(1-\exp(-\alpha t))^{-1}$, $\gamma_t\defeq(1-\exp(-\gamma t))^{-1}$, $\vh_t\defeq\sqrt{\gamma_t\vv_t}+\eps\1$.
\end{minipage}
\label{tab:compare}
\end{table}

\subsection{Sliding mode dynamics}

Although both standard optimization (with no weight decay) and \method{} are unbiased with respect to the original objective, their behaviors diverge within the stationary manifold. 
In the former, the dynamics halt as the momentum $\vm$ decays to zero, while, in contrast, the \method{} dynamics induce a \emph{sliding mode}, continuing to move along the manifold while reducing the parameter magnitudes as much as possible. Consequently, the algorithm converges to a subset of the stationary manifold where further simultaneous reduction of all coordinates of $\vx$ is no longer possible. Equivalently, it converges to a locally Pareto-optimal stationary point under a preference for smaller parameter magnitudes.

To provide mathematical background, consider a possibly time-varying discontinuous ODE
\[
  \dot \vz_t = f_t(\vz_t), \quad \vz_t \in \R^d.
\]
Due to the discontinuity of $f_t$, the solution may not be well defined in the classical or Carath\'eodory sense, especially across switching surfaces. We therefore interpret solutions in the Filippov sense \citep{Filippov88}, where a discontinuous ODE is formally  a \emph{differential inclusion} that specifies that $\dot \vz_t$ belongs to the closed convex envelope of the discontinuous vector field, i.e.
\[
  \dot \vz_t \in \calF[f_t](\vz_t) \defeq \bigcap_{\delta>0}\bigcap_{\mu(S)=0}
  \overline{\mathrm{co}}(f_t(\ball(\vz_t
  ,\delta)\setminus S)),
\]
where $\mu$ denotes the Lebesgue measure, $\ball(\vz,\delta)$ is the $\delta$-ball centered at $\vz$, and $\overline{\mathrm{co}}$ denotes the closed convex envelope. This construction captures all possible limiting directions of the vector field near discontinuities, ensuring well-defined dynamics even when $f_t$ is not continuous. 
The key idea is that the values of $\dot \vz_t$ must be determined by the behavior of $f_t$ in a neighborhood around $\vz_t$, rather than at the point itself.  
The inclusion, therefore, defines a range of admissible velocities consistent with the nearby values of the vector field.

In particular, whenever $f_t$ contains coordinatewise indicators such as $\ind(g(\vz_t)\geq 0)$, the Filippov set replaces them by \emph{selectors} $\vs_t \in [0,1]^d$ on the switching set $\{[g(\vz_t)]_i=0\}$:
\[
  [\vs_t]_i \in
  \begin{cases}
    \{1\} & [g(\vz_t)]_i>0,\\
    \{0\} & [g(\vz_t)]_i<0,\\
    [0,1] & [g(\vz_t)]_i=0.
  \end{cases}
\]

Recalling the Lyapunov analysis in Section~\ref{sec:unbiasedopt}, the continuous-time dynamics of standard optimizers with \method{} converge to the stationary manifold $\mathbb{M}\defeq\{\vx \mid \nabla f(\vx) = \0\}$, with the momentum $\vm_t$ also decaying to $\0$ for momentum-based methods. Consequently, once the trajectory enters the stationary manifold, the residual dynamics reduce to
\begin{equation}\label{eq:sliding_mode}
    \dot{\vx}_t = -\lambda \vs_t \odot \vx_t, 
    \qquad \vs_t \in [0,1]^d.
\end{equation}

Moreover, since the Lyapunov function confines the dynamics to the stationary set, the selectors $\vs_t$ must be chosen such that the trajectory remains within the manifold. Differentiating the stationarity condition yields
\[
    \frac{\dd}{\dd t}\nabla f(\vx_t) 
      = -\lam\nabla^2 f(\vx_t) (\vs_t \odot \vx_t) = \0, 
      \qquad \vs_t \in [0,1]^d.
\]
This relation allows us to solve for admissible choices of $\vs_t$ that guarantee invariance of the manifold. In general, the solution for $\vs_t$ need not be unique, and the actual value realized in practice may be implicitly determined by the discretization scheme employed.

Effectively, \method{} decreases parameter magnitudes along each coordinate while staying within the stationary manifold, pushing $\vx$ toward the \emph{local Pareto front} of the manifold
\[
\mathbb{P} \defeq \Brace{\vx\in\mathbb{M} \mid \exists\delta>0\;\forall\vy\in(\ball(\vx,\delta)\cap\mathbb{M})\setminus\{\vx\},|\vy|\not\leq|\vx|},
\]
where the tangent space no longer allows a nonzero $\vs_t$ in \eqref{eq:sliding_mode}. In other words, a stationary point is locally Pareto-optimal if it has a neighborhood in the stationary manifold that contains no other point with a smaller or equal magnitude in every coordinate.

This argument shows that \method{} dynamics converge to $\mathbb{P}$.  
Since $\mathbb{P}$ may not be a singleton, the exact limit point depends intricately on initialization and the discretization of the continuous-time dynamics. Figure~\ref{fig:toy_example} illustrates this behavior on two toy problems.

\begin{figure}[t]
\vspace{-1.5\baselineskip}
    \centering
    \includegraphics[width=0.98\linewidth]{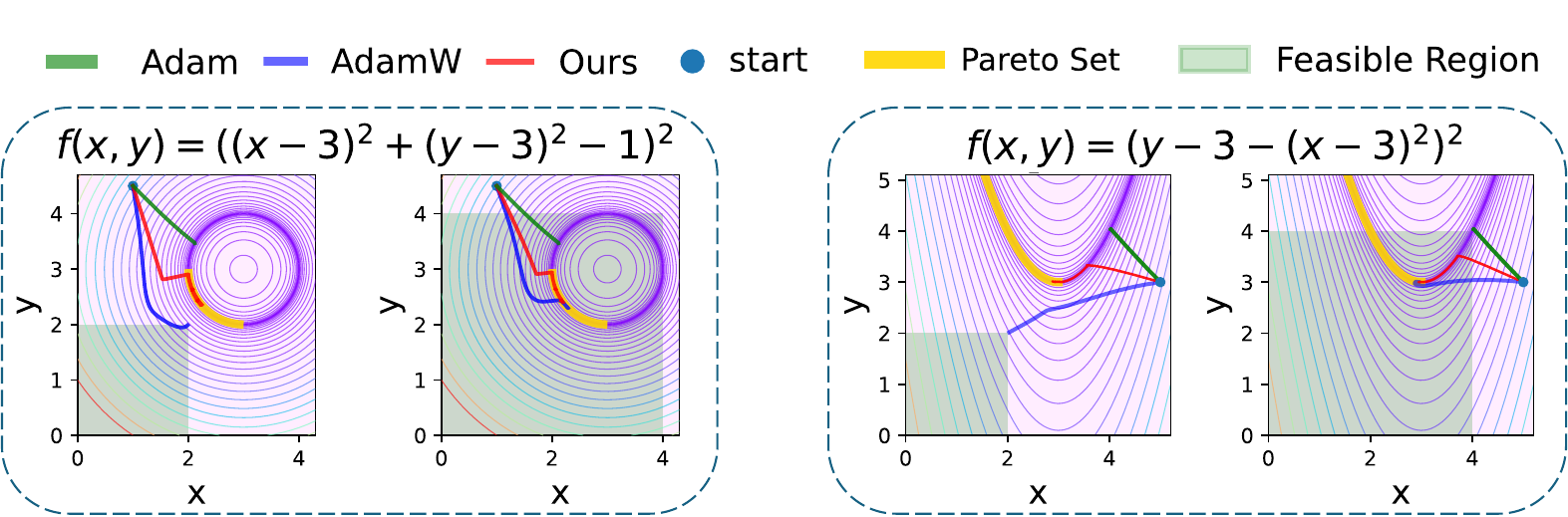}
\caption{
Toy objectives and trajectories. 
\textbf{Left}: \(f(x,y)=((y-3)^2+(x-3)^2-1)^2\).
\textbf{Right}: \(f(x,y)=(y-3-(x-3)^2)^2\).
We compare \adam{}, \adamw{}, and \adam{} + \Meth{}; \adamw{} and \Meth{} use the same weight decay \(\lambda\), and all other hyperparameters \((\eta,\beta_1,\beta_2,\epsilon)\) are identical. 
For both objectives, \adam{} converges to a generic point on the minimizer manifold, whereas \adamw{} converges to a solution of the box-constrained problem \(\min_{x,y} f(x,y)\) subject to \(\max\{x,y\}\leq\frac{1}{\lam}\). 
In contrast, \adam{} + \Meth{} converges to the Pareto front of the minimizer manifold.
}
    \label{fig:toy_example}
\end{figure}

\subsection{Discrete-time analysis}
Leveraging the Lyapunov functions in Table~\ref{tab:compare}, we can extend our continuous-time analysis to obtain convergence guarantees for the discrete-time dynamics of various optimizers with \method{}. As a concrete example, we provide in Appendix~\ref{app:convergence} an explicit convergence rate for discrete-time \adam{} with \method{}.
\section{Experiments}\label{sec:experiments}

\begin{figure}[t]
\vspace{-0.3\baselineskip}
  \centering
  \includegraphics[width=0.99\textwidth]{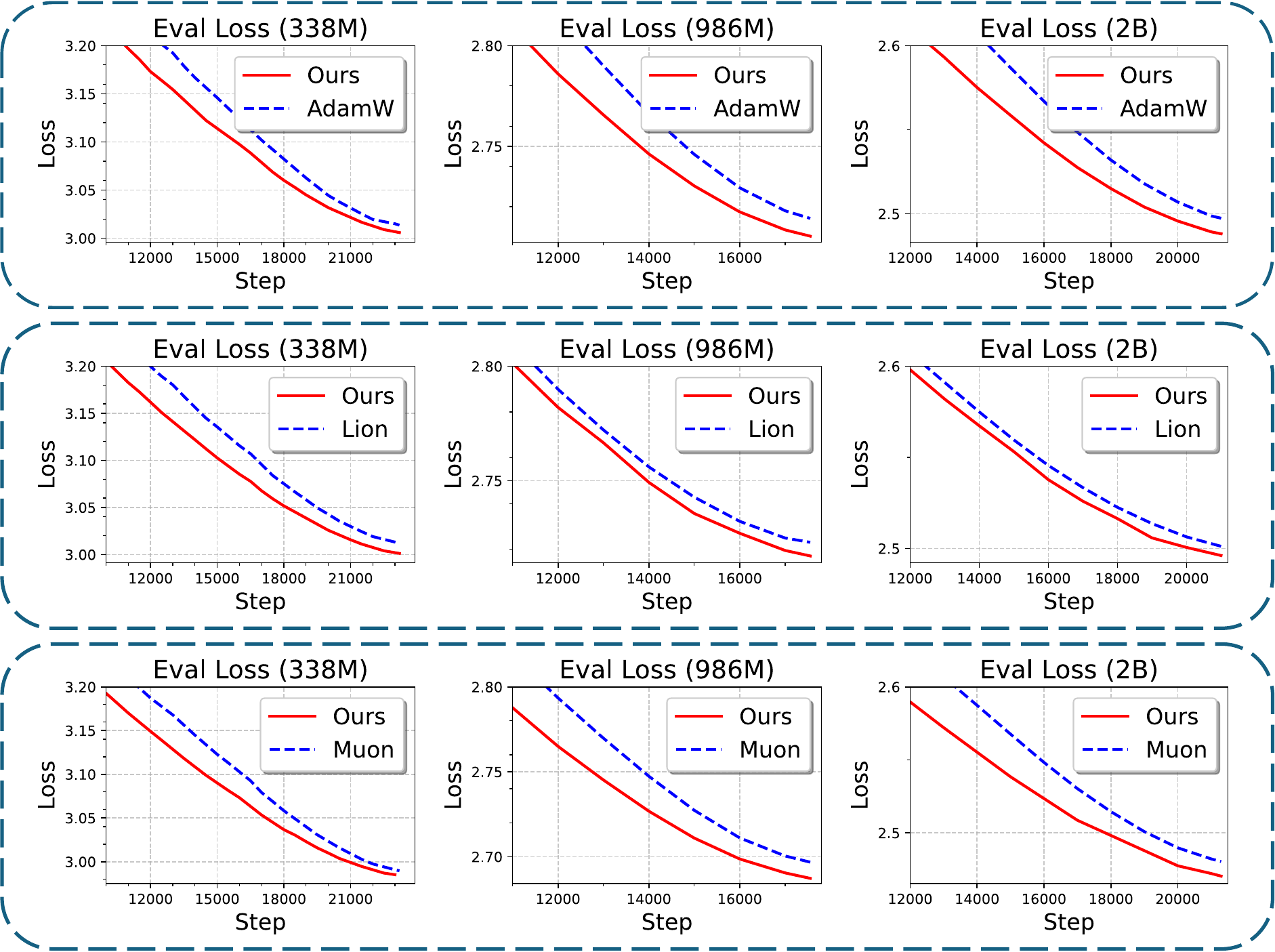}
\caption{\textbf{Evaluation loss across scales.} 3$\times$3 grid for 338M, 986M, and 2B Transformer models trained with \adamw{}, \lion{}, and \muon{} on C4 dataset. \textit{All panels show a zoom into the final $\sim$40\% of training steps} to highlight late-stage behavior. Baseline curves (dashed blue) use standard weight decay with tuned hyperparameters (learning rate schedule, $\beta$’s, weight decay, etc.; see Appendix~\ref{app:configs}). Our method (solid red) follows Algorithm~\ref{alg:cwd} and reuses the baseline hyperparameters without additional tuning. 
Full (non-zoomed) curves are in Figures~\ref{fig:muon_loss}, \ref{fig:adam_loss} and \ref{fig:lion_loss} in Appendix~\ref{ssec::addtitional_results}.
}
\vspace{-0.8\baselineskip}
  \label{fig:all_loss}
\end{figure}

\textbf{Overview.}
We evaluate \Meth{} against three standard optimizers—\adamw{}, \lion{}, and \muon{}—on autoregressive language modeling and ImageNet classification. For 
Transformer models with similar architecture to Gemma~\citep{Kamath25}
with 338M, 986M, and 2B parameters in the Simply~\citep{Liang25} codebase, we follow the Chinchilla compute-optimal scaling rule—20 tokens per parameter (TPP) \citep{HoffmannBMBCRCH22} and train on C4~\citep{RaffelSRLNMZLL20}. For each size, we grid-search batch size, learning rate, weight decay, warmup ratio, and optimizer-specific hyperparameters for the baselines (\adamw{}, \lion{}, \muon{}); \emph{we then reuse the selected baseline settings for \Meth{} without retuning} (details in Appendix~\ref{app:configs}). Under matched settings, \Meth{} lowers final validation loss and improves zero-shot accuracy. On the OLMo codebase~\citep{OLMo25}, we further study an over-training regime—OLMo-1B trained on 100B tokens (100 TPP) from Dolma~\citep{SoldainiKBSAABC24}. Under matched settings, \Meth{} lowers final validation loss and improves zero-shot accuracy (Table~\ref{tab:results}). We also observe similar gains on ImageNet~\citep{DengDSLL009} across ViT~\citep{DosovitskiyBKWZ21} and ResNet~\citep{HeZRS16}.

\textbf{Ablations of weight decay.}
Figure~\ref{fig:wd_loss} sweeps the weight–decay coefficient $\lambda$ for a 338M model on C4: $\lambda\!\in\![0,\,0.4]$ for \muon{} and \adamw{}, and $\lambda\!\in\![0,\,3.0]$ for \lion{}. Two patterns are consistent across runs: (i) at a fixed $\lambda$, \Meth{} attains a lower final loss than the corresponding baseline with decoupled weight decay; (ii) the minimizing value $\lambda^\star$ is essentially unchanged when replacing the baseline with \Meth{}. In practice, one can swap in \Meth{} at an already tuned $\lambda$ and obtain improvements without additional sweeps.

\begin{table}[t]
  \centering
  \caption{Ablation study of selective weight decay strategies on OLMo-1B (100B tokens). We compare our momentum-based selection against alternative masking approaches. \textbf{Baseline}: standard weight decay ($\lambda$ tuned). \textbf{Ours}: update-based mask $\ind(\vu\vx\geq0)$ using baseline's $\lambda$ without retuning. \textbf{Random}: time-varying Bernoulli mask matching our method's sparsity ratio (see Figure~\ref{fig:ratio} in Appendix~\ref{ssec::addtitional_results}). \textbf{Gradient}: uses $\ind(\vg\vx\geq0)$ instead. \textbf{No WD}: $\lambda=0$. Lower validation loss is better.}
  \setlength{\tabcolsep}{0pt} \footnotesize
  \vspace{-3mm}
  \begin{tabular*}{0.86\linewidth}{@{\extracolsep{\fill}}lccccc}
    \toprule
    & \multicolumn{2}{c}{\textbf{Weight Decay Active}} & \multicolumn{2}{c}{\textbf{Ablated Masks}} & \textbf{Disabled} \\
    \cmidrule(lr){2-3} \cmidrule(lr){4-5} \cmidrule(lr){6-6}
    \textbf{Optimizer} & Baseline & \textbf{Ours} & Random & Gradient & No WD \\
    \midrule
    \adamw{} & 2.65 & \textbf{2.56} & 2.82 & 2.75 & 2.70 \\
    \muon{}  & 2.51 & \textbf{2.42} & 2.73 & 2.74 & 2.62 \\
    \bottomrule
  \end{tabular*}
    \vspace{-.3\baselineskip}
  \label{tab:ablation_experiments}
\end{table}

\begin{table}[t]
  \centering
  \caption{ImageNet validation accuracy (\%) across architectures and optimizers. All models train for 300 epochs with standard augmentation. \textbf{Base}: optimizer with tuned weight decay. \textbf{Ours}: \method{} using the same coefficient as baseline (no retuning).}
  \setlength{\tabcolsep}{0pt}\footnotesize
  \vspace{-3mm}
  \begin{tabular*}{0.85\linewidth}{@{\extracolsep{\fill}}lrcccccc}
    \toprule
    & & \multicolumn{2}{c}{\textbf{\adamw{}}} & \multicolumn{2}{c}{\textbf{\lion{}}} & \multicolumn{2}{c}{\textbf{\muon{}}} \\
    \cmidrule(lr){3-4} \cmidrule(lr){5-6} \cmidrule(lr){7-8}
    \textbf{Model} & \textbf{Params} & Base & Ours & Base & Ours & Base & Ours \\
    \midrule
    ViT-S/16 & 22.05M & 78.84 & \textbf{79.45} & 79.29 & \textbf{79.82} & 79.35 & \textbf{79.91} \\
    ResNet-50 & 25.56M & 76.30 & \textbf{76.68} & 76.41 & \textbf{76.75} & 76.47 & \textbf{76.83} \\
    ViT-B/16 & 86.57M & 80.15 & \textbf{80.71} & 80.76 & \textbf{80.92} & 80.83 & \textbf{81.04} \\
    \bottomrule
  \end{tabular*}
    \vspace{-1.\baselineskip}
  \label{tab:imagenet_results}
\end{table}

\begin{table}[t]
    \centering
    \setlength{\tabcolsep}{2.5pt}\footnotesize
    \begin{tabular}{lcccccc}
        \toprule
         \textbf{Optimizer} & \textbf{Hellaswag} $\uparrow$ & \textbf{ARC-Easy} $\uparrow$ & \textbf{ARC-C} $\uparrow$ & \textbf{PIQA} $\uparrow$ & \textbf{MMLU} $\uparrow$ & \textbf{ComQA} $\uparrow$ \\
         & acc\_norm & acc\_norm & acc\_norm & acc\_norm & acc & acc \\
        \midrule
        \adamw{}  & 0.38 & 0.50 & 0.25 & 0.67 & 0.23 & 0.29 \\
        \adamw{}+\Meth{} & \textbf{0.40} & \textbf{0.53} & \textbf{0.27} & \textbf{0.69} & \textbf{0.25} & \textbf{0.31} \\
        \midrule
        \midrule
        \muon{}  & 0.39 & \textbf{0.51} & 0.26 & 0.68 & 0.24 & 0.30 \\
        \muon{}+\Meth{} & \textbf{0.41} & \textbf{0.51} & \textbf{0.28} & \textbf{0.71} & \textbf{0.26} & \textbf{0.33} \\
        \bottomrule
    \end{tabular}
\vspace{-0.5\baselineskip}
\caption{Downstream accuracy 
across diverse reasoning benchmarks. 
All runs use the OLMo codebase with 1B-parameter models trained for 100B tokens under an over-training regime. 
Here ARC-C=ARC-Challenge and ComQA=CommonsenseQA. Figure~\ref{fig:shared_caption} shows the corresponding loss curves.}
\vspace{-1\baselineskip}
\label{tab:results}
\end{table}
\begin{figure}[t]
\vspace{-1.5mm}
  \centering
  \begin{minipage}{0.9\textwidth}
    \centering
    \begin{subfigure}{0.45\linewidth}
      \centering
      \includegraphics[width=\linewidth]{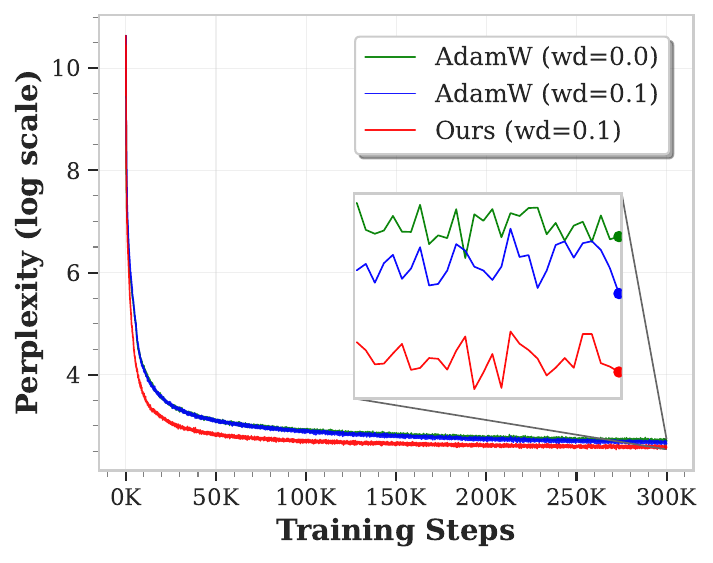}
    \end{subfigure}%
    \hfill
    \begin{subfigure}{0.45\linewidth}
      \centering
      \includegraphics[width=\linewidth]{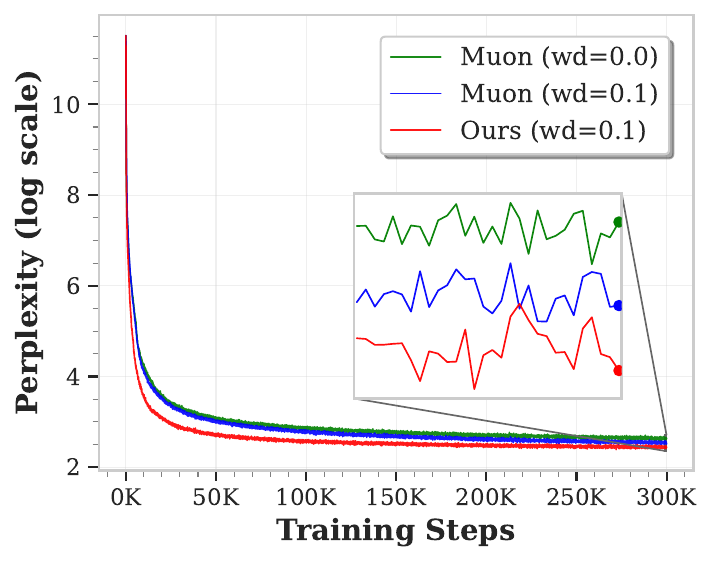}
    \end{subfigure}
  \end{minipage}
  \vspace{-0.75\baselineskip}
  \caption{Training loss of OLMo 1B on 100B tokens. \textbf{Left}: \adamw{}. 
  \textbf{Right}: \muon{}. 
  }
  \vspace{-0.7\baselineskip}
  \label{fig:shared_caption}
\end{figure}

\textbf{Ablations on masking.}
Table~\ref{tab:ablation_experiments} tests whether the benefits arise from the \emph{amount} of decay applied or from \Meth{}’s \emph{structure}. Replacing our mask with a time-matched Bernoulli ``random mask'' substantially degrades performance (e.g., $2.56\!\to\!2.82$ for \adamw{}, $2.42\!\to\!2.73$ for \muon{}), showing that simply reducing the frequency of decay is insufficient. Substituting the indicator with the gradient-based $\ind(\vg\vx\geq\0)$ also underperforms. Finally, $\lambda=0$ remains worse than tuned decay, illustrating that explicit regularization is helpful and \Meth{} leverages it more effectively. We further verify the difference between \Meth{} and SPD~\citep{TianHK24} with additional instruction-following fine-tuning experiments. We fine-tune on the Alpaca GPT-4 dataset~\citep{PengLHGG23}, which contains 52,000 instruction--response pairs generated by GPT-4~\citep{OpenAI23}, and evaluate on three benchmarks: MMLU~\citep{HendrycksBBZMSS21}, comprising 57 tasks and 14{,}079 questions covering a broad range of world knowledge; AGIEval~\citep{ZhongCGLLWSCD24}, a human-centric benchmark with 9,316 instances targeting general reasoning and problem-solving skills; and WinoGrande~\citep{SakaguchiBBC21}, a large-scale commonsense reasoning dataset with 44,000 instances. We consider two base models, TinyLlama~\citep{ZhangZWL24} and Mistral-7B~\citep{Jiang23}, and compare LoRA~\citep{HuSWALWWC22}, SPD~\citep{TianHK24}, a layerwise ``inner-product'' variant of \Meth{} using $\ind(\inprod{\vu}{\vx}\geq0)$, and our proposed ``elementwise'' \Meth{}. Table~\ref{tab:spd_cwd_instruction} reports accuracy on MMLU, AGIEval, and WinoGrande for TinyLlama and Mistral-7B.

\textbf{Training dynamics.}
On 1B models trained for 100B tokens, we observe that \Meth{} tends to improve the loss trajectory relative to tuned \adamw{} and \muon{}, rather than only the final value (Figure~\ref{fig:shared_caption}). A similar pattern appears at 986M: Figure~\ref{fig:train_loss} in Appendix~\ref{ssec::addtitional_results} shows evaluation/training loss and RMS parameter norm over time. 
\Meth{} generally achieves lower loss while ending with an intermediate norm. In contrast, removing decay entirely ($\lambda=0$) descends faster mid-training but plateaus earlier, finishing at higher loss and the largest norm; tuned \adamw{} with $\lambda>0$ yields the smallest norm. Overall, these results suggest that the gains come from a more selective application of regularization rather than from disabling it.

\textbf{\Meth{} outperforms standard decay across optimizers and scales.}
Under the common setup across 338M, 986M, and 2B parameters, \Meth{} consistently lowers eval loss for \adamw{}, \lion{}, and \muon{} (see Figure~\ref{fig:all_loss} and Figures~\ref{fig:muon_loss}--\ref{fig:lion_loss} in Appendix~\ref{ssec::addtitional_results}) and increases downstream accuracy (Table~\ref{tab:results}). 

\textbf{\Meth{} yields lower gradient norms than standard decay.} Across model sizes, \Meth{} produces lower RMS-normalized gradient norms than the corresponding baselines (see Figure~\ref{fig:grad_norm} in Appendix~\ref{ssec::addtitional_results}). This coincides with the lower end-of-training loss in Figure~\ref{fig:shared_caption} and the accuracy gains in Table~\ref{tab:results}.

\begin{table}[H]
\centering\footnotesize
\begin{tabular}{lllll}
\toprule
\textbf{Model}    & \textbf{Method}          & \textbf{MMLU (5-shot)} $\uparrow$ & \textbf{AGIEval (3-shot)} $\uparrow$ & \textbf{WinoGrande (5-shot)} $\uparrow$ \\
\midrule
TinyLlama         & LoRA (baseline)                     & 25.81 $\pm$ 0.07       & 19.82 $\pm$ 0.11          & 61.33 $\pm$ 0.09             \\
TinyLlama         & SPD                      & 26.14 $\pm$ 0.08       & \textbf{20.21 $\pm$ 0.10} & 61.92 $\pm$ 0.08             \\
TinyLlama         & Inner-product \Meth{}        & 26.02 $\pm$ 0.08       & 19.80 $\pm$ 0.10          & 61.70 $\pm$ 0.09             \\
TinyLlama         & Elementwise \Meth{}          & \textbf{26.42 $\pm$ 0.09} & 20.12 $\pm$ 0.09       & \textbf{62.18 $\pm$ 0.08}    \\
\midrule
Mistral-7B        & LoRA (baseline)                     & 61.78 $\pm$ 0.09       & 27.56 $\pm$ 0.07          & 78.85 $\pm$ 0.11             \\
Mistral-7B        & SPD                      & 62.05 $\pm$ 0.08       & 27.98 $\pm$ 0.06          & 78.81 $\pm$ 0.10             \\
Mistral-7B        & Inner-product \Meth{}        & 61.76 $\pm$ 0.09       & 27.90 $\pm$ 0.07          & 78.83 $\pm$ 0.10             \\
Mistral-7B        & Elementwise \Meth{}          & \textbf{62.13 $\pm$ 0.07} & \textbf{28.31 $\pm$ 0.06} & \textbf{78.92 $\pm$ 0.09} \\
\bottomrule
\end{tabular}
\caption{Accuracy on MMLU, AGIEval, and WinoGrande for TinyLlama and Mistral-7B fine-tuned on Alpaca GPT-4 using LoRA, SPD, and two variants of \Meth{}. Both SPD and \Meth{} improve over the LoRA baseline, and the proposed \emph{elementwise} \Meth{} matches or outperforms SPD and the inner-product variant on most benchmarks.}
\label{tab:spd_cwd_instruction}
\end{table}

\textbf{Scaling with model size.} We measured final validation loss for \adamw{} and \Meth{} (``Ours'') at 111M, 338M, 986M, and 2B parameters on the same dataset (Figure~\ref{fig:scaling}). At every scale, \Meth{} attains a lower final validation loss than \adamw{}, and the gap remains stable or even widens with model size, indicating that the advantages of cautious weight decay persist into the large-model regime.

\begin{figure}[h]
    \centering
    \begin{minipage}[c]{0.45\linewidth}
        \caption{Final validation loss versus model size for \adamw{} and \adam{} + \Meth{} (``Ours'') on the same pretraining dataset, at 111M, 338M, 986M, and 2B parameters. Across all scales, \adam{} + \Meth{} achieves consistently lower final validation loss than \adamw{}, and the gap remains stable or slightly increases with model size, suggesting that the benefits of cautious weight decay persist in the large-model regime.}
        \label{fig:scaling}
    \end{minipage}%
    \hspace{0.1cm} 
    \begin{minipage}[c]{0.5\linewidth}
        \centering
        \includegraphics[width=1.0\linewidth]{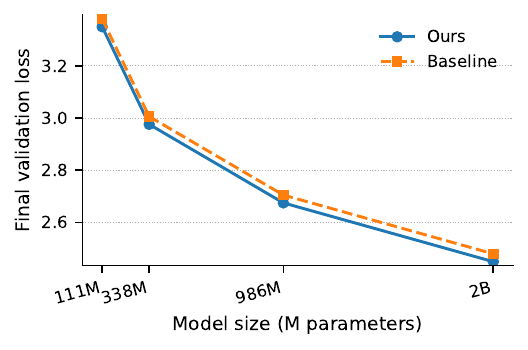}
    \end{minipage}
\end{figure}

\begin{table}[h]
\centering\footnotesize
\begin{tabular}{lll}
\toprule
\textbf{Model Size} & \textbf{Optimizer}                      & \textbf{Final Validation Loss} \\
\midrule
338M                & \adamw{} (baseline)                        & 3.0136                          \\
338M                & \adam{} + \Meth{}           & 3.0059                          \\
338M                & \textsc{AdamC}                                   & 3.0087                          \\
338M                & \textsc{AdamC} + \Meth{}           & \textbf{2.9915}                 \\
\bottomrule
\end{tabular}
\caption{Final validation loss for Gemma-based models with different optimizers at 338M parameters. Adding \method{} improves both \adamw{} and \textsc{AdamC}, with \textbf{\textsc{AdamC} + \Meth{}} achieving the lowest loss.}
\label{tab:gemma_adamc_cwd}
\end{table}
\vspace{-0.55cm}

\begin{table}[H]
\centering\footnotesize
\begin{tabular}{llll}
\toprule
\textbf{Model Size} & \textbf{Optimizer}    & \textbf{Scheduler} & \textbf{Final Validation Loss} \\
\midrule
338M                & \adamw{} (baseline)      & Cosine             & 3.0136                          \\
338M                & \adamw{} (baseline)      & WSD                & 3.0101                          \\
338M                & \adam{} + \Meth{}        & Cosine             & \textbf{3.0059}                 \\
338M                & \adam{} + \Meth{}        & WSD                & \textbf{3.0014}                 \\
\bottomrule
\end{tabular}
\caption{Final validation loss for a 338M model under different optimizer--scheduler combinations. \adam{} + \Meth{} improves over the \adamw{} baseline for both cosine and WSD schedules, with the best performance obtained by \adam{} + \Meth{} with WSD.}
\label{tab:schedule_cwd}
\end{table}

\textbf{Compound improvement with other techniques.} We also observe \emph{compounding gains} when combining \method{} with other optimizer techniques on Gemma-based model architectures. In particular, we compare \adamw{}, \textsc{AdamC} \citep{Defazio25}, and their variants with \Meth{} at 338M parameters (Table~\ref{tab:gemma_adamc_cwd}).

\textbf{Robustness to learning-rate schedules.} We observe that \Meth{} improves performance for both cosine scheduling and the Warmup--Stable--Decay (WSD) schedule, which uses a 10\% warmup, a long stable phase, and a 20\% final decay (Table~\ref{tab:schedule_cwd}).

\section{Related Work}\label{sec:related} 

\textbf{Weight decay.}
Weight decay originated as an $\ell_2$ penalty for ill-posed problems and ridge regression \citep{Tikhonov63, HoerlK70} and was introduced to neural networks as a generalization tool to mitigate overfitting \citep{HansonP88, WeigendRH90, KroghH91}. 
\cite{LoshchilovH19} showed that, for adaptive methods, weight decay and $\ell_2$ regularization are not equivalent, motivating the decoupled formulation in \adamw{}; subsequent work established decoupled decay as a standard feature of modern optimizers \citep{ChenLHRW0DLHLL23, ChenLLL24, LiuSY25}. Recent analyses suggest that in contemporary networks, weight decay functions more as a training accelerator and stabilizer than as explicit regularization \citep{KrizhevskySH17, HoffmannBMBCRCH22, PanC23, DAngeloAVF24}. Interactions with early-stage training \citep{BjorckWG21}, normalization layers, and learning rate schedules \citep{Defazio25} have also been clarified, and architectural designs can obviate explicit decay \citep{LoshchilovHSG25}.

\textbf{Weight decay variants.}
Various efforts have been made to develop different adaptive variants of weight decay. For example, \cite{XieXZSS23} found that weight decay can lead to large gradient norms at the final phase of training and proposed Scheduled Weight Decay (SWD) to dynamically adjust weight decay strength based on gradient norms. \cite{KossonMJ24} investigate how weight decay affects individual neuron updates, revealing rotational equilibrium states that balance learning across layers and neurons. \cite{GhiasiSA23} introduce adaptive weight decay that automatically tunes the hyperparameter during training based on classification and regularization loss gradients, achieving significant improvements in adversarial robustness. \cite{TianHK24} introduce Selective Projection Decay (SPD) for robust fine-tuning, featuring selective weight decay via a mask that is somewhat similar to \Meth{}. However, SPD and \Meth{} differ in significant ways, including the intended setting, mechanism, theoretical properties, and empirical performance.

\textbf{Masked or conditional updates.} Several works have explored the sign-based conditioning of optimizer updates. \cite{RiedmillerB93} introduced \textsc{Rprop}, which adjusted step sizes based on current gradient and past gradient sign agreement. \cite{LiangCLL24} propose the cautious optimizer, which restricts updates to dimensions where the proposed update and current gradient share the same sign. \cite{WangLXLYXPZZL24} apply a similar mask to \adam{} to improve robustness in online learning.

\textbf{Constrained and bilevel optimization.} Decoupled weight decay can be interpreted through the lens of Frank–Wolfe algorithms for constrained optimization \citep{FrankW56, Jaggi13, SfyrakiW25, PethickXAZSC25}. This connection suggests that optimizers with decoupled weight decay implicitly solve constrained optimization problems, which was shown to be the case for \lion{} \citep{ChenLLL24, SfyrakiW25, PethickXAZSC25}, \adamw{} \citep{XieL24, BernsteinN24}, and \muon{} \citep{ChenLL25, SfyrakiW25, LauLS25}. In contrast, optimizers with \method{} perform bilevel optimization, a framework from classical optimization \citep{Solodov07a, Solodov07b, SabachS17} that has been recently explored in machine learning \citep{GongLL21, LiuYWSL22, PetrulionyteMA24}.

\section{Conclusion}\label{sec:conclusion}

We introduce \method{} and formalize it as a simple, optimizer-agnostic modification of decoupled weight decay that preserves the optimization objective while retaining the practical benefits of weight decay. For standard optimizers (SGD, \adam{}, and \lionk{}), we show the bilevel optimization structure of \method{} and establish convergence guarantees in both continuous- and discrete-time regimes. Across diverse tasks and benchmarks, \method{} consistently improves training dynamics compared to no decay and traditional decoupled decay, yielding faster loss reduction and more stable trajectories without changes to hyperparameters or model architectures. Our results indicate that \method{} is a theoretically principled and empirically effective technique that retains the benefits of weight decay while addressing its fundamental limitations.

\section*{Acknowledgments}
This work was supported in part by the Institute for Foundations of Machine Learning (IFML) and the Office of Naval Research (ONR) under Grant No. N00014-25-1-2354.

\bibliographystyle{alpha}
\bibliography{ref}

\appendix
\section{Notation and Definitions}\label{app:notation}

$\N\defeq\{1,2,3,\dots\}$ denotes the natural numbers. For $n\in\N$, $[n]$ denotes the set $\{1,2,\dots,n\}$. Vectors are denoted in lowercase boldface, and matrices are denoted in capital boldface. $\0$ and $\1$ denote the all-zeros and all-ones tensors of appropriate dimension, respectively. Scalar operations and functions, e.g. multiplication, division, and square roots, are understood to be performed entrywise when applied to vectors. We also use $\odot$ to explicitly denote the entrywise product. $x^+$ denotes the positive part of $x$, i.e. \[ x^+\defeq\max(0,x)=\begin{cases}
    x&\text{if }x>0\\
    0&\text{otherwise}
\end{cases}. \] $\norm{\cdot}_p$ denotes the $\ell_p$ norm for $p\in[1,\infty]$. $\inprod{\cdot}{\cdot}$ denotes the standard inner product on $\R^d$. $[\vx]_i$ denotes the $i^\text{th}$ entry of a vector $\vx$. $\diag{\vx}$ denotes the diagonal matrix with diagonal entries given by $\vx$. $\ind(\vx\geq\0)$ denotes the indicator tensor that is $1$ in a coordinate if $\vx$ is nonnegative in that coordinate and $0$ otherwise. If $\calK:\R^d\to\R$ is convex, we let $\partial\calK(\vx)$ denote the set of subgradients of $\calK$ at $\vx$ and overload $\nabla\calK(\vx)$ to denote an element of $\partial\calK(\vx)$.

\begin{definition}[$L$-smoothness]
    A function $f:\R^d\to\R$ is \emph{$L$-smooth} if it is differentiable and \[ \norm{\nabla f(\vy)-\nabla f(\vx)}_2\leq L\norm{\vy-\vx}_2\quad\text{for all }\vx,\vy\in\R^d. \] If $f$ is $L$-smooth, then \[ f(\vy)\leq f(\vx)+\inprod{\nabla f(\vx)}{\vy-\vx}+\frac{L}{2}\norm{\vy-\vx}_2^2\quad\text{for all }\vx,\vy\in\R^d. \]
\end{definition}

\begin{definition}[Coerciveness]
    A function $f:\R^d\to\R$ is \emph{coercive} if $f(\vx)\to\infty$ as $\norm{\vx}\to\infty$.
\end{definition}
\section{Pseudocode of Optimizers with \Meth{}}

\subsection{SGD with momentum}
\begin{algorithm}[H]
    \caption{SGD with momentum \highlighttext{and \method{}}}
    \label{alg:sgd}
    \footnotesize
    \begin{algorithmic}[1]
    \STATE{\textbf{given} learning rates $\{\eta_t\}_{t\in\N}\subset\R_{>0}$, momentum coefficient $\beta\in[0,1)$, \highlighttext{weight decay coefficient $\lam>0$}}
    \STATE{\textbf{initialize} time step $t\gets1$, parameters $\vx_1\in\R^d$, first moment $\vm_0\gets\0$} 
    \REPEAT
    	\STATE{$\vg_t\gets\operatorname{StochasticGradient}(\vx_t)$}
    	\STATE{$\vm_t\gets\beta\vm_{t-1}+(1-\beta)\vg_t$}
    	\STATE{$\vx_{t+1}\gets\vx_t-\eta_t\Par{\vm_t\highlight{+\lam\ind(\vm_t\vx_t\geq\0)\vx_t}}$}\COMMENT{entrywise multiplication}
    	\STATE{$t\gets t+1$}
    \UNTIL{stopping criterion is met}
    \RETURN{optimized parameters $\vx_t$}
    \end{algorithmic}
\end{algorithm}

\subsection{\texorpdfstring{\lionk{}}{Lion-K}}
\begin{algorithm}[H]
    \caption{\lionk{} \highlighttext{with \method{}}}
    \label{alg:lion_k}
    \footnotesize
    \begin{algorithmic}[1]
    \STATE{\textbf{given} learning rates $\{\eta_t\}_{t\in\N}\subset\R_{>0}$, momentum coefficients $\beta_1,\beta_2\in[0,1)$, convex $\calK:\R^d\to\R$ with subgradient $\nabla\calK$, \highlighttext{weight decay coefficient $\lam>0$}}
    \STATE{\textbf{initialize} time step $t\gets1$, parameters $\vx_1\in\R^d$, first moment $\vm_1\gets\0$} 
    \REPEAT
    	\STATE{$\vg_t\gets\operatorname{StochasticGradient}(\vx_t)$}
    	\STATE{$\vm_{t+1}\gets\beta_2\vm_t-(1-\beta_2)\vg_t$}
    	
        \STATE{$\tvm_{t+1}\gets\beta_1\vm_t-(1-\beta_1)\vg_t$}
        \STATE{$\vx_{t+1}\gets\vx_t+\eta_t\Par{\nabla\calK(\tvm_{t+1})\highlight{-\lam\ind(\nabla\calK(\tvm_{t+1})\vx_t\leq\0)\vx_t}}$}\COMMENT{entrywise multiplication}
    	\STATE{$t\gets t+1$}
    \UNTIL{stopping criterion is met}
    \RETURN{optimized parameters $\vx_t$}
    \end{algorithmic}
\end{algorithm}

\subsection{\lion{}}
\begin{algorithm}[H]
    \caption{\lion{} \highlighttext{with \method{}}}
    \label{alg:lion}
    \footnotesize
    \begin{algorithmic}[1]
    \STATE{\textbf{given} learning rates $\{\eta_t\}_{t\in\N}\subset\R_{>0}$, momentum coefficients $\beta_1,\beta_2\in[0,1)$,\\\highlighttext{weight decay coefficient $\lam>0$}}
    \STATE{\textbf{initialize} time step $t\gets1$, parameters $\vx_1\in\R^d$, first moment $\vm_0\gets\0$} 
    \REPEAT
    	\STATE{$\vg_t\gets\operatorname{StochasticGradient}(\vx_t)$}
    	\STATE{$\tvm_t\gets\beta_1\vm_{t-1}+(1-\beta_1)\vg_t$}
    	\STATE{$\vx_{t+1}\gets\vx_t-\eta_t\Par{\sgn(\tvm_t)\highlight{+\lam\ind(\tvm_t\vx_t\geq\0)\vx_t}}$}\COMMENT{entrywise $\sgn$ and multiplication}
        \STATE{$\vm_t\gets\beta_2\vm_{t-1}+(1-\beta_2)\vg_t$}
    	\STATE{$t\gets t+1$}
    \UNTIL{stopping criterion is met}
    \RETURN{optimized parameters $\vx_t$}
    \end{algorithmic}
\end{algorithm}

\subsection{\muon{}}
\begin{algorithm}[H]
    \caption{\muon{} \highlighttext{with \method{}}}
    \label{alg:muon}
    \footnotesize
    \begin{algorithmic}[1]
    \STATE{\textbf{given} learning rates $\{\eta_t\}_{t\in\N}\subset\R_{>0}$, momentum coefficient $\beta\in[0,1)$,\highlighttext{weight decay coefficient $\lam>0$}}
    \STATE{\textbf{initialize} time step $t\gets1$, parameters $\mx_1\in\R^{n\times m}$, first moment $\mm_0\gets\0$} 
    \REPEAT
    	\STATE{$\mg_t\gets\operatorname{StochasticGradient}(\mx_t)$}
    	\STATE{$\mm_t\gets\beta\mm_{t-1}+\mg_t$}
        \STATE{$\mo_t\gets\operatorname{NewtonSchulz}(\mm_t)$}\COMMENT{approximation of matrix sign}
    	\STATE{$\mx_{t+1}\gets\mx_t-\eta_t\Par{\mo_t\highlight{+\lam\ind(\mo_t\mx_t\geq\0)\mx_t}}$}\COMMENT{entrywise matrix multiplication}
    	\STATE{$t\gets t+1$}
    \UNTIL{stopping criterion is met}
    \RETURN{optimized parameters $\mx_t$}
    \end{algorithmic}
\end{algorithm}

\subsection{\adam{}}
\begin{algorithm}[H]
    \caption{\adam{} \highlighttext{with \method{}}}
    \label{alg:adam}
    \footnotesize
    \begin{algorithmic}[1]
    \STATE{\textbf{given} learning rates $\{\eta_t\}_{t\in\N}\subset\R_{>0}$, momentum coefficients $0\leq\beta_1\leq\beta_2<1$, numerical stability constant $\eps\geq0$, \highlighttext{weight decay coefficient $\lam>0$}}
    \STATE{\textbf{initialize} time step $t\gets1$, parameters $\vx_1\in\R^d$, first moment $\vm_0\gets\0$, second moment $\vv_0\gets\0$}
    \REPEAT
    	\STATE{$\vg_t\gets\operatorname{StochasticGradient}(\vx_t)$}
    	\STATE{$\vm_t\gets\beta_1\vm_{t-1}+(1-\beta_1)\vg_t$}
    	\STATE{$\vv_t\gets\beta_2\vv_{t-1}+(1-\beta_2)\vg_t^2$}
        \COMMENT{entrywise multiplication}
    	\STATE{$\hvm_t\gets(1-\beta_1^t)^{-1}\vm_t$}
    	\STATE{$\hvv_t\gets(1-\beta_2^t)^{-1}\vv_t$}
    	\STATE{$\vx_{t+1}\gets\vx_t-\eta_t\Par{\frac{\hvm_t}{\sqrt{\hvv_t}+\eps\1}\highlight{+\lam\ind(\vm_t\vx_t\geq\0)\vx_t}}$}
        \COMMENT{entrywise operations}
        \STATE{$t\gets t+1$}
    \UNTIL{stopping criterion is met}
    \RETURN{optimized parameters $\vx_t$}
    \end{algorithmic}
\end{algorithm}
\section{Fixed-Point Analysis}\label{app:fixed}

Revisiting the fixed-point analysis in Section~\ref{ssec:implicit} for \adamw{}, suppose the trajectory of \adamw{} converges to a fixed point $(\vx^\star,\hvm^\star,\hvv^\star)$, so that $\hvm^\star=\nabla f(\vx^\star)$ and $\hvv^\star=\nabla f(\vx^\star)^{2}$. Passing to the limit $\eps \searrow0$, the fixed-point condition gives \[ \frac{\nabla f(\vx^\star)}{|\nabla f(\vx)^\star|+\eps\1}+\lam\vx^\star\to\sgn(\nabla f(\vx^\star))+\lam\vx^\star=\0. \] Taking inner products with $\nabla f(\vx^\star)$ yields $\norm{\nabla f(\vx^\star)}_1+\inprod{\lam\vx^\star}{\nabla f(\vx^\star)}=0$, which shows that $\vx^\star$ is a Karush--Kuhn--Tucker (KKT) point of the constrained optimization problem \begin{equation}\label{eq:adamw_objective}
    \min_{\vx\in\R^d}f(\vx)\quad\text{s.t.}\quad\norm{\vx}_\infty\leq\frac{1}{\lam}
\end{equation} by Lemma 3.8 of \cite{XieL24}. Intuitively, \adamw{} normalizes the gradient to its coordinatewise sign at stationarity and then balances it against the linear pull of the decoupled weight decay, which enforces a box constraint on the parameters. \cite{XieL24} formalize this intuition and show that whenever the iterates of \adamw{} converge, the limit point is a KKT point of the box-constrained problem \eqref{eq:adamw_objective}. However, this guarantee holds only under the assumption of convergence, and \adamw{} is not known to converge in general.

We remark that we can adapt this argument for another, more heuristic insight into why optimizers with \method{} perform unbiased optimization. Suppose \adam{} with \method{} reaches a fixed point, so that \[ \frac{\nabla f(\vx^\star)}{|\nabla f(\vx^\star)|+\eps\1}=-\lam\ind(\nabla f(\vx^\star)\vx^\star\geq\0)\vx^\star. \] For a fixed point of \lionk{} with \method{}, we have \[ -\nabla\calK(-\nabla f(\vx^\star))=\lam\ind(\nabla\calK(-\nabla f(\vx^\star))\vx^\star\leq\0)\vx^\star. \] In either situation, casework on the signs of the update and $\vx^\star$ shows that both sides must be $\0$. It follows that $\nabla f(\vx^\star)=\0$ for \adam{} and $\nabla\calK(-\nabla f(\vx^\star))=\0$ for \lionk{}, and if $\calK$ is a convex function that achieves a unique minimum at $\0$ (e.g. a norm), then this condition becomes $\nabla f(\vx^\star)=\0$ as well. Hence, the fixed-point analysis suggests that \adam{} and \lionk{} with \method{} find a stationary point of the original objective $f$.

\section{Lyapunov Functions}
\label{app:lyapunov}

Throughout this section, vector variables are implicitly dependent on $t$ when clear from context, and we drop the subscript for notational simplicity.

\subsection{SGD}

SGD with \method{} admits the continuous-time dynamics \[ \dot{\vx}=-\nabla f(\vx)-\lam\ind(\nabla f(\vx)\vx\geq\0)\vx, \] which has a Lyapunov function $\calH(\vx)=f(\vx)$, since \[ \frac{\dd\calH}{\dd t}=\inprod{\nabla f(\vx)}{-\nabla f(\vx)-\lam\ind(\nabla f(\vx)\vx\geq\0)\vx}=-\norm{\nabla f(\vx)}_2^2-\lam\norm{(\nabla f(\vx)\vx)^+}_1\leq0. \]

\subsection{SGD with momentum}

When SGD is equipped with momentum \citep{SutskeverMDH13} and \method{}, the continuous-time dynamics becomes
\begin{align*}
    \dot{\vx}&=-\vm-\lam\ind(\vm\vx\geq\0)\vx\\
    \dot{\vm}&=\beta(\nabla f(\vx)-\vm),
\end{align*} which has a Lyapunov function \[ \calH(\vx,\vm)=\beta f(\vx)+\half\norm{\vm}_2^2+\lam\norm{(\vm\vx)^+}_1, \] since \begin{align*}
    \frac{\dd\calH}{\dd t}&=\inprod{\beta\nabla f(\vx)+\lam\ind(\vm\vx\geq\0)\vm}{-\vm-\lam\ind(\vm\vx\geq\0)\vx}+\inprod{\vm+\lam\ind(\vm\vx\geq\0)\vx}{\beta(\nabla f(\vx)-\vm)}\\
    &=-\inprod{\lam\ind(\vm\vx\geq\0)+\beta\1}{\vm^2}-\lam(\beta+\lam)\norm{(\vm\vx)^+}_1\leq0.
\end{align*}

\subsection{\texorpdfstring{\lionk{}}{Lion-K}}

We assume that $\calK$ is convex and satisfies $\sgn(\nabla\calK(\vm))=\sgn(\vm)$ for all $\vm\in\R^d$. This assumption is mild and that holds for every example of $\calK$ given by \cite{ChenLLL24}.

The continuous-time dynamics of \lionk{} without gradient enhancement is given by \begin{equation}\label{eq:lion_k}
    \begin{aligned}
        \dot{\vx}&=\nabla\calK(\vm)-\lam\vx\\
        \dot{\vm}&=-\alpha\nabla f(\vx)-\gamma\vm.
    \end{aligned}
\end{equation} \cite{ChenLLL24} showed that this system has a Lyapunov function \[ \calH(\vx,\vm)=\alpha f(\vx)+\frac{\gamma}{\lam}\calK^*(\lam\vx)+\calK^*(\lam\vx)+\calK(\vm)-\inprod{\vm}{\lam\vx}, \] thereby elucidating the origin of the $\calK^*(\lam\vx)$ regularization term. However, when equipped with \method{}, \eqref{eq:lion_k} becomes \begin{equation}\label{eq:masked_lion_k}
    \begin{aligned}
        \dot{\vx}&=\nabla\calK(\vm)-\lam\ind(\vm\vx\leq\0)\vx\\
        \dot{\vm}&=-\alpha\nabla f(\vx)-\gamma\vm
    \end{aligned}
\end{equation} and admits a Lyapunov function \begin{equation}\label{eq:masked_lion_k_lyapunov}
    \calH(\vx,\vm)=\alpha f(\vx)+\calK(\vm)+\lam\norm{(-\vm\vx)^+}_1,
\end{equation} which corresponds to optimizing the original objective $f$. To see that \eqref{eq:masked_lion_k_lyapunov} is a Lyapunov function for \eqref{eq:masked_lion_k}, note that \begin{align*}
    \frac{\dd\calH}{\dd t}&=\inprod{\alpha\nabla f(\vx)-\lam\ind(\vm\vx\leq\0)\vm}{\nabla\calK(\vm)-\lam\ind(\vm\vx\leq\0)\vx}\\
    &\quad+\inprod{\nabla\calK(\vm)-\lam\ind(\vm\vx\leq\0)\vx}{-\alpha\nabla f(\vx)-\gamma\vm}\\
    &=-\inprod{\nabla\calK(\vm)-\lam\ind(\vm\vx\leq\0)\vx}{(\lam\ind(\vm\vx\leq\0)+\gamma\1)\vm}\\
    &=-\inprod{\lam\ind(\vm\vx\leq\0)+\gamma\1}{\nabla\calK(\vm)\vm}-\lam(\lam+\gamma)\norm{(-\vm\vx)^+}_1\leq0.
\end{align*}

\subsection{\texorpdfstring{\adam{}}{Adam}}
\label{ssec::adam_lya}
The continuous-time limit of \adam{} with \method{} yields the system of ordinary differential equations (cf. \cite{BarakatB21}) \begin{equation}\label{eq:ode}
    \begin{aligned}
        \dot{\vx}&=-\frac{(1-\exp(-\alpha t))^{-1}\vm}{\sqrt{(1-\exp(-\gamma t))^{-1}\vv}+\eps\1}-\lam\ind(\vm\vx\geq\0)\vx\\
        \dot{\vm}&=\alpha(\nabla f(\vx)-\vm)\\
        \dot{\vv}&=\gamma(\nabla f(\vx)^2-\vv).
    \end{aligned}
\end{equation}

We assume that $0<\gamma\leq4\alpha$, which is satisfied by standard implementations of \adam{} in practice. This system admits the Lyapunov function \begin{equation}\label{eq:adamw_lyapunov}
    \calH(\vx,\vm,\vv,t)=\alpha f(\vx)+\norm{\frac{\alpha_t\vm^2}{2(\sqrt{\gamma_t\vv}+\eps\1)}}_1+\lam\norm{(\vm\vx)^+}_1,
\end{equation} where \[ \alpha_t\defeq(1-\exp(-\alpha t))^{-1}\quad\text{and}\quad\gamma_t\defeq(1-\exp(-\gamma t))^{-1}. \] To see that \eqref{eq:adamw_lyapunov} is a Lyapunov function for \eqref{eq:ode}, note that $\calH$ is lower bounded by $\alpha f^\star$ and \begin{align*}
    \frac{\dd\calH}{\dd t}&=\inprod{\nabla_\vx\calH}{\dot{\vx}}+\inprod{\nabla_\vm\calH}{\dot{\vm}}+\inprod{\nabla_\vv\calH}{\dot{\vv}}+\frac{\partial\calH}{\partial t}\\
    &=\inprod{\alpha\nabla f(\vx)+\lam\ind(\vm\vx\geq\0)\vm}{-\frac{\alpha_t\vm}{\sqrt{\gamma_t\vv}+\eps\1}-\lam\ind(\vm\vx\geq\0)\vx}\\
    &\quad+\inprod{\frac{\alpha_t\vm}{\sqrt{\gamma_t\vv}+\eps\1}+\lam\ind(\vm\vx\geq\0)\vx}{\alpha(\nabla f(\vx)-\vm)}-\inprod{\frac{\alpha_t\sqrt{\gamma_t}\vm^2}{4\sqrt{\vv}\Par{\sqrt{\gamma_t\vv}+\eps\1}^2}}{\gamma(\nabla f(\vx)^2-\vv)}\\
    &\quad-\inprod{\frac{\vm^2}{2}\cdot\frac{2\alpha\exp(-\alpha t)(\sqrt{\gamma_t\vv}+\eps\1)-\alpha_t^{-1}\gamma\exp(-\gamma t)\gamma_t\sqrt{\gamma_t\vv}}{2\Par{\alpha_t^{-1}(\sqrt{\gamma_t\vv}+\eps\1)}^2}}{\1}\\
    &=-\inprod{(\alpha\1+\lam\ind(\vm\vx\geq\0))\frac{\alpha_t\vm^2}{\sqrt{\gamma_t\vv}+\eps\1}+\lam(\alpha+\lam)(\vm\vx)^++\frac{\alpha_t\gamma\sqrt{\gamma_t}\vm^2\nabla f(\vx)^2}{4\sqrt{\vv}\Par{\sqrt{\gamma_t\vv}+\eps\1}^2}}{\1}\\
    &\quad+\inprod{\frac{\alpha_t\gamma\vm^2\sqrt{\gamma_t\vv}}{4\Par{\sqrt{\gamma_t\vv}+\eps\1}^2}}{\1}-\inprod{\frac{\vm^2}{2}\cdot\frac{2\alpha\exp(-\alpha t)(\sqrt{\gamma_t\vv}+\eps\1)-\alpha_t^{-1}\gamma\exp(-\gamma t)\gamma_t\sqrt{\gamma_t\vv}}{2\Par{\alpha_t^{-1}(\sqrt{\gamma_t\vv}+\eps\1)}^2}}{\1}\\
    &\leq\inprod{\Par{\frac{\gamma}{4}-\alpha}\1-\lam\ind(\vm\vx\geq\0)}{\frac{\alpha_t\vm^2}{\sqrt{\gamma_t\vv}+\eps\1}}-\inprod{\frac{\alpha_t(2\alpha_t\alpha\exp(-\alpha t)-\gamma_t\gamma\exp(-\gamma t))\vm^2}{4(\sqrt{\gamma_t\vv}+\eps\1)}}{\1}\\
    &=\inprod{\Par{\frac{\gamma}{4}-\alpha-\frac{\alpha}{2(\exp(\alpha t)-1)}+\frac{\gamma}{4(\exp(\gamma t)-1)}}\1-\lam\ind(\vm\vx\geq\0)}{\frac{\alpha_t\vm^2}{\sqrt{\gamma_t\vv}+\eps\1}}\\
    &\leq0,
\end{align*} where the first inequality drops some nonpositive terms and uses $\sqrt{\gamma_t\vv}\leq\sqrt{\gamma_t\vv}+\eps\1$ and the second inequality uses \[ \frac{\gamma}{4}-\alpha-\frac{\alpha}{2(\exp(\alpha t)-1)}+\frac{\gamma}{4(\exp(\gamma t)-1)}\leq0 \] for $0<\gamma\leq4\alpha$ and $t>0$.

\begin{remark}
    \Method{} can be seen as an attempt to fix the asymptotic instability of \adamw{} via a Lyapunov function. Consider the simplified continuous-time \adamw{} dynamics \begin{equation}\label{eq:adamw_ode}
        \begin{aligned}
            \dot{\vx}&=-\frac{\vm}{\sqrt{\vv}}-\lam\vx\\
            \dot{\vm}&=\nabla f(\vx)-\vm\\
            \dot{\vv}&=\nabla f(\vx)^2-\vv
        \end{aligned}
    \end{equation} and the function \[ \calH(\vx,\vm,\vv)=f(\vx)+\norm{\frac{\vm^2}{2\sqrt{\vv}}}_1+\inprod{\vm}{\lam\vx}. \] 
    By straightforward computation,
    \begin{align*}
        \frac{\dd\calH}{\dd t}
        &=\inprod{\nabla f(\vx)+\lam\vm}{-\frac{\vm}{\sqrt{\vv}}-\lam\vx}+\inprod{\frac{\vm}{\sqrt{\vv}}+\lam\vx}{\nabla f(\vx)-\vm}+\inprod{-\frac{\vm^2}{4\vv^{\frac{3}{2}}}}{\nabla f(\vx)^2-\vv}\\
        &=-\inprod{\Par{\lam+\frac{3}{4}}\frac{\vm^2}{\sqrt{\vv}}+\lam(\lam+1)\vm\vx+\frac{\vm^2\nabla f(\vx)^2}{4\vv^{\frac{3}{2}}}}{\1}\\
        &=-\Par{\lam+\frac{3}{4}}\norm{\frac{\vm^2}{\sqrt{\vv}}}_1-\lam(\lam+1)\inprod{\vm}{\vx}-\frac{1}{4}\norm{\frac{\vm^2\nabla f(\vx)^2}{\vv^{\frac{3}{2}}}}_1.
    \end{align*}
    Note that $\calH$ is not guaranteed to be lower bounded and $-\frac{\dd\calH}{\dd t}$ is not guaranteed to be nonnegative, since $\inprod{\vm}{\vx}$ has unknown sign. This motivates the introduction of a mask $\ind(\vm\vx\geq\0)$ to the weight decay term and a slight adjustment to $\calH$ so that the result is a Lyapunov function for \eqref{eq:adamw_ode}.
\end{remark}

\begin{remark}
    For expositional clarity, we treat the ODEs and Lyapunov candidates in this section as smooth, even though the dynamics include the discontinuous indicator function $\ind(\vu\vx\geq\0)$. A fully rigorous analysis can be developed by interpreting the systems in the sense of differential inclusions, specifically, using Filippov’s framework \citep{Filippov88}, and by applying specialized tools from nonsmooth Lyapunov stability theory to obtain convergence guarantees \citep{ShevitzP94, BacciottiC99}.
\end{remark}
\section{Convergence Rate of \adam{} with \METHOD{}}
\label{app:convergence}

In this section, we show that under the following assumptions, \adam{} with cautious weight decay (Algorithm~\ref{alg:adam}) achieves a convergence rate on the squared gradient norm and an additional stationarity measure.

\begin{assumption}[Smoothness]\label{assume:f}
    $f$ is $L$-smooth and lower bounded by a finite constant $f^\star$.
\end{assumption}

\begin{assumption}[Bounded variance]\label{assume:g}
    The stochastic gradient $\vg_t$ satisfies \[ \E[\vg_t\mid\vx_t]=\nabla f(\vx_t)\quad\text{and}\quad\Var(\vg_t)=\E\Brack{\norm{\vg_t-\nabla f(\vx_t)}_2^2\mid\vx_t}\leq\frac{\sig^2}{\nbatch}, \] where $\sig$ is a constant and $\nbatch$ denotes the batch size.
\end{assumption}

\begin{assumption}[Bounded iterates and bounded gradients]\label{assume:bounded}
    There exist constants $R$ and $G$ such that $\norm{\vx_t}_\infty\leq R$ and $\norm{\vg_t}_\infty\leq G$ a.s. for all $t\in\N$.
\end{assumption}

Assumptions~\ref{assume:f} and \ref{assume:g} are standard and often used in the analysis of stochastic gradient algorithms \citep{GhadimiL13, BarakatB21, DefossezBBU22, ArjevaniCDFSW23}. Assumption~\ref{assume:bounded} can be justified using the Lyapunov function \eqref{eq:adamw_lyapunov} if $f$ is additionally assumed to be coercive, since a Robbins--Siegmund argument with sufficiently small $\eta$ shows that the optimizer states remain in a compact sublevel set of $\calH$ a.s. For the sake of clarity, here we take it as an explicit assumption. Similar assumptions have often been used for the analysis of \adam{}-style algorithms \citep{KingmaB15, ReddiKK18, ZaheerRSKK18, ChenLSH19, DefossezBBU22, ChenSZL22}.

\begin{theorem}\label{thm:rate}
    Under Assumptions~\ref{assume:f}, \ref{assume:g}, and \ref{assume:bounded}, let $0\leq\beta_1\leq\beta_2<1$, $\lam\geq0$, $\eps>0$, and $\eta_t=\eta>0$, and suppose $\vx_t$ is updated using Algorithm~\ref{alg:adam}. Then for all $T\in\N$, \[ \frac{1}{T}\sum_{t\in[T]}\E\Brack{\norm{\nabla f(\vx_t)}_2^2+\lam\norm{(\nabla f(\vx_t)\vx_t)^+}_1}\leq\frac{K_1}{\eta T}+\frac{K_2}{T}+K_3\eta+\frac{K_4\sig}{\sqrt{\nbatch}}, \] where $K_1$, $K_2$, $K_3$, and $K_4$ are constants depending only on $L$, $R$, $G$, $d$, $\eps$, $\lam$, $\beta_1$, $\beta_2$, and $f(\vx_1)-f^\star$.
\end{theorem}

\begin{remark} 
    The first term on the left-hand side, $\norm{\nabla f(\vx_t)}_2^2$, reflects how much $f$ is optimized, while the second term, $\norm{(\nabla f(\vx_t)\vx_t)^+}_1$, reflects the degree of conflict between the objective $f$ and the parameter magnitudes. If $\nabla f(\vx_t)\vx_t\gg\0$, then there is room to jointly decrease both $f$ and the magnitudes. Thus, a small value of $\norm{(\nabla f(\vx_t)\vx_t)^+}_1$ indicates that the optimizer has reached a state where it is difficult to further decrease $f$ and shrink the magnitudes simultaneously. This corresponds to convergence toward a Pareto front, where trade-offs between the two objectives become unavoidable.
\end{remark}

\begin{remark}
    In the setting of Theorem~\ref{thm:rate}, let $T\in\N$ and $\eta=\Theta\Par{\frac{1}{\sqrt{T}}}$. Then \[ \frac{1}{T}\sum_{t\in[T]}\E\Brack{\norm{\nabla f(\vx_t)}_2^2+\lam\norm{(\nabla f(\vx_t)\vx_t)^+}_1}=O\Par{\frac{1}{\sqrt{T}}+\frac{\sig}{\sqrt{\nbatch}}}. \] An $O(T^{-\half})$ bound can be obtained by making the unrealistic assumption $\nbatch=\Theta(T)$. However, even without this assumption, the stated bound is of theoretical interest. For additional discussion, see \cite{BernsteinWAA18, ZaheerRSKK18, ChenLLL24}.
\end{remark}
\begin{lemma}\label{lem:bounded_update}
    For all $t\in\N$, \[ \norm{\frac{\hvm_t}{\sqrt{\hvv_t}+\eps\1}}_\infty\leq\sqrt{\frac{1-\beta_1}{1-\beta_2}}\eqdef C. \]
\end{lemma}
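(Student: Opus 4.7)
The strategy is to reduce the lemma to a coordinatewise comparison between $\hvm_t$ and $\hvv_t$, and then exploit the assumption $\beta_1 \le \beta_2$. Fix an index $i$ and drop it in notation. Unrolling the recurrences from Algorithm~\ref{alg:adam} gives
\[
  \vm_t = (1-\beta_1)\sum_{s=1}^{t}\beta_1^{t-s}\vg_s,\qquad \vv_t = (1-\beta_2)\sum_{s=1}^{t}\beta_2^{t-s}\vg_s^2,
\]
so after applying the bias corrections, $\hvm_t$ and $\hvv_t$ are weighted averages of $\vg_s$ and $\vg_s^2$ respectively, with weights normalized to sum to one.

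The core estimate is a Cauchy--Schwarz bound on $\hvm_t$ in terms of a $\beta_1$-weighted second moment. Writing $\beta_1^{t-s} = \beta_1^{(t-s)/2}\cdot\beta_1^{(t-s)/2}$ inside the sum and applying Cauchy--Schwarz yields coordinatewise
\[
  |\hvm_t|^2 \;\le\; \frac{1-\beta_1}{1-\beta_1^t}\sum_{s=1}^{t}\beta_1^{t-s}\vg_s^2,
\]
after cancelling one factor of $\sum_s \beta_1^{t-s} = (1-\beta_1^t)/(1-\beta_1)$. The right-hand side is not yet $\hvv_t$ because the weights are the wrong exponent. Here I invoke $\beta_1 \le \beta_2 < 1$, which gives $\beta_1^{t-s}\le\beta_2^{t-s}$ for every $s\le t$, so the sum is bounded by the corresponding $\beta_2$-weighted sum, producing $|\hvm_t|^2 \le \frac{(1-\beta_1)(1-\beta_2^t)}{(1-\beta_1^t)(1-\beta_2)}\hvv_t$.

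Finally I use $\beta_1 \le \beta_2$ once more in the form $1-\beta_2^t \le 1-\beta_1^t$ to absorb the bias-correction ratio, leaving $|\hvm_t|^2 \le \frac{1-\beta_1}{1-\beta_2}\hvv_t = C^2 \hvv_t$. Taking square roots and noting $\sqrt{\hvv_t} \le \sqrt{\hvv_t}+\eps$ gives the desired bound coordinatewise, which is the $\ell_\infty$ norm bound after taking the maximum over $i$. There is no real obstacle here; the only subtlety is ensuring the two uses of $\beta_1 \le \beta_2$ are compatible and that the bias-correction factors cancel cleanly, which they do because both of them are monotone in the same direction.
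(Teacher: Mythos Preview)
Your proof is correct and follows essentially the same approach as the paper: work coordinatewise, apply Cauchy--Schwarz to the unrolled $\hvm_t$, and use $\beta_1\le\beta_2$ twice to absorb the bias-correction ratio. The only cosmetic difference is in the Cauchy--Schwarz splitting---the paper writes $\beta_1^{t-s}=(\beta_1/\sqrt{\beta_2})^{t-s}\cdot\beta_2^{(t-s)/2}$ so that $\hvv_t$ appears directly and then bounds $\sum_s(\beta_1^2/\beta_2)^{t-s}\le\sum_s\beta_1^{t-s}$, whereas you split $\beta_1^{t-s}$ symmetrically and then upgrade $\beta_1^{t-s}\le\beta_2^{t-s}$ in the second-moment sum; both routes yield the identical estimate.
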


\begin{proof}
    It suffices to work in an arbitrary coordinate $i$. Let $m\defeq[\hvm_t]_i$, $v\defeq[\hvv_t]_i$, and $g_t\defeq[\vg_t]_i$. By expanding the update rules for $m$ and $v$, we obtain \[ m=\frac{1-\beta_1}{1-\beta_1^t}\sum_{k\in[t]}\beta_1^{t-k}g_k\quad\text{and}\quad v=\frac{1-\beta_2}{1-\beta_2^t}\sum_{k\in[t]}\beta_2^{t-k}g_k^2. \] Now by Cauchy--Schwarz, \begin{align*}
        \frac{m^2}{v}&\leq\frac{(1-\beta_1)^2}{(1-\beta_1^t)^2}\cdot\frac{1-\beta_2^t}{1-\beta_2}\cdot\sum_{k\in[t]}\Par{\frac{\beta_1^2}{\beta_2}}^{t-k}\leq\frac{(1-\beta_1)^2}{(1-\beta_1^t)^2}\cdot\frac{1-\beta_2^t}{1-\beta_2}\cdot\sum_{k\in[t]}\beta_1^{t-k}\\
        &=\frac{(1-\beta_1)^2}{(1-\beta_1^t)^2}\cdot\frac{1-\beta_2^t}{1-\beta_2}\cdot\frac{1-\beta_1^t}{1-\beta_1}=\frac{1-\beta_1}{1-\beta_2}\cdot\frac{1-\beta_2^t}{1-\beta_1^t}\leq\frac{1-\beta_1}{1-\beta_2}.
    \end{align*} The conclusion follows from \[ \frac{m}{\sqrt{v}+\eps}\leq\frac{m}{\sqrt{v}}\leq\sqrt{\frac{1-\beta_1}{1-\beta_2}}. \]
\end{proof}

\begin{fact}[Lemma F.1, \cite{BernsteinWAA18}]\label{fact:martingale}
    For all $t\in\N$, $i\in[d]$, and $\alpha_1,\alpha_2,\dots,\alpha_t\in\R$, \[ \E\Brack{\Par{\sum_{k\in[t]}\alpha_k([\vg_k]_i-[\nabla f(\vx_k)]_i)}^2}\leq\frac{\sig^2}{\nbatch}\sum_{k\in[t]}\alpha_k^2. \]
\end{fact}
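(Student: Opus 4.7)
The plan is to recognize the inner sum as a weighted sum of martingale differences and exploit their pairwise orthogonality in $L^2$. Concretely, write $\xi_k \defeq [\vg_k]_i-[\nabla f(\vx_k)]_i$ and let $\calF_k$ denote the natural filtration generated by the stochastic gradients $\vg_1,\dots,\vg_k$ (so that $\vx_{k+1}$ is $\calF_k$-measurable since it is a deterministic function of the gradients produced so far by Algorithm~\ref{alg:adam}). By Assumption~\ref{assume:g}, $\E[\vg_k\mid\calF_{k-1}]=\nabla f(\vx_k)$, hence $\E[\xi_k\mid\calF_{k-1}]=0$; that is, $(\xi_k)_{k\in[t]}$ is a martingale difference sequence with respect to $(\calF_k)$.

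The key step is to expand
\[ \E\Brack{\Par{\sum_{k\in[t]}\alpha_k\xi_k}^2}=\sum_{k\in[t]}\alpha_k^2\E[\xi_k^2]+2\sum_{1\leq j<k\leq t}\alpha_j\alpha_k\E[\xi_j\xi_k], \]
and then kill the cross-terms. For $j<k$, the tower property gives
\[ \E[\xi_j\xi_k]=\E\Brack{\xi_j\,\E[\xi_k\mid\calF_{k-1}]}=0, \]
because $\xi_j$ is $\calF_{k-1}$-measurable and $\E[\xi_k\mid\calF_{k-1}]=0$. This reduces the double sum to the diagonal $\sum_k\alpha_k^2\E[\xi_k^2]$.

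Finally, I would bound each $\E[\xi_k^2]$ by the per-coordinate variance. Since
\[ \E[\xi_k^2]=\E\Brack{([\vg_k]_i-[\nabla f(\vx_k)]_i)^2}\leq\E\Brack{\norm{\vg_k-\nabla f(\vx_k)}_2^2}=\Var(\vg_k)\leq\frac{\sig^2}{\nbatch}, \]
by Assumption~\ref{assume:g}, summing over $k\in[t]$ yields the claim. The only subtlety I would want to be careful about is ensuring that $\alpha_k$ itself is treated as deterministic (as the statement allows), so that no additional measurability considerations arise; the argument above then goes through verbatim and there is no real obstacle beyond bookkeeping.
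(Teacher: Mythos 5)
Your proof is correct: the paper does not prove this Fact at all but imports it by citation from Lemma F.1 of \cite{BernsteinWAA18}, and your argument---martingale-difference orthogonality of $\xi_k=[\vg_k]_i-[\nabla f(\vx_k)]_i$ with respect to the gradient filtration, followed by bounding each diagonal term $\E[\xi_k^2]$ by the full squared-norm variance $\sig^2/\nbatch$---is exactly the standard route that the cited lemma takes. The one point you implicitly (and correctly) rely on is reading the unbiasedness in Assumption~\ref{assume:g} as conditional on the past, i.e.\ $\E[\vg_k\mid\calF_{k-1}]=\nabla f(\vx_k)$, which is the only interpretation under which the statement is meaningful since $\vx_k$ is itself random.
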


\begin{lemma}\label{lem:m_error}
    For all $t\in\N$, \[ \E[\norm{\nabla f(\vx_t)-\vm_t}_1]\leq\beta_1^tGd+\frac{\beta_1\eta Ld(C+\lam R)}{1-\beta_1}+\frac{\sig d}{\sqrt{\nbatch(1+\beta_1)}}. \]
\end{lemma}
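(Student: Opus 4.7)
The plan is to unroll the momentum recursion, decompose $\nabla f(\vx_t) - \vm_t$ into three pieces corresponding to the three terms on the right-hand side, and bound each piece separately. Since $\vm_0 = \0$, iterating the update $\vm_t = \beta_1\vm_{t-1} + (1-\beta_1)\vg_t$ gives $\vm_t = (1-\beta_1)\sum_{k=1}^t \beta_1^{t-k}\vg_k$. Combining with the identity $(1-\beta_1)\sum_{k=1}^t \beta_1^{t-k} = 1 - \beta_1^t$, I would write
\[
\nabla f(\vx_t) - \vm_t = \beta_1^t\,\nabla f(\vx_t) + (1-\beta_1)\sum_{k=1}^t \beta_1^{t-k}\bigl(\nabla f(\vx_t) - \nabla f(\vx_k)\bigr) + (1-\beta_1)\sum_{k=1}^t \beta_1^{t-k}\bigl(\nabla f(\vx_k) - \vg_k\bigr),
\]
take $\ell_1$ norms, and apply the triangle inequality.

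For the first piece, Lemma~\ref{lem:bounded_x} directly yields $\norm{\beta_1^t\nabla f(\vx_t)}_1 \leq \beta_1^t d G$. For the deterministic drift term, I would first bound the per-step displacement: combining Lemma~\ref{lem:bounded_update} (which gives $\norm{\hvm_t/(\sqrt{\hvv_t}+\eps\1)}_\infty \leq C$) with the trivial bound $\lam\norm{\ind(\vm_t\vx_t\geq\0)\vx_t}_\infty \leq \lam R$ from Lemma~\ref{lem:bounded_x}, the Algorithm~\ref{alg:adam} update satisfies $\norm{\vx_{k+1} - \vx_k}_\infty \leq \eta(C + \lam R)$, so telescoping gives $\norm{\vx_t - \vx_k}_\infty \leq (t-k)\eta(C+\lam R)$. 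Converting $\ell_2$ to $\ell_1$ through the dimension yields $\norm{\nabla f(\vx_t) - \nabla f(\vx_k)}_1 \leq dL(t-k)\eta(C+\lam R)$ by $L$-smoothness, and then summing against the geometric weights with $\sum_{j=0}^\infty j\beta_1^j = \beta_1/(1-\beta_1)^2$ collapses the $(1-\beta_1)$ prefactor and produces exactly $\beta_1 \eta L d (C+\lam R)/(1-\beta_1)$.

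For the stochastic piece, I would apply Fact~\ref{fact:martingale} coordinatewise with $\alpha_k = (1-\beta_1)\beta_1^{t-k}$ to get
\[
\E\Bigl[\bigl([\nabla f(\vx_t)]_i - [\vm_t]_i - \beta_1^t[\nabla f(\vx_t)]_i - \text{(drift)}_i\bigr)^2\Bigr] \leq \frac{\sig^2(1-\beta_1)^2}{\nbatch}\sum_{k=1}^t \beta_1^{2(t-k)} \leq \frac{\sig^2(1-\beta_1)}{\nbatch(1+\beta_1)},
\]
where I used the geometric bound $\sum_{k=1}^t \beta_1^{2(t-k)} \leq 1/(1-\beta_1^2)$. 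Jensen's inequality ($\E|X| \leq \sqrt{\E X^2}$) converts this to an $\ell_1$-type bound per coordinate, and summing over the $d$ coordinates (with the harmless slack $\sqrt{1-\beta_1} \leq 1$) yields $\sig d/\sqrt{\nbatch(1+\beta_1)}$. Adding the three bounds by the triangle inequality gives the claim.

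No single step is conceptually hard; the proof is essentially bookkeeping for a standard momentum-error decomposition. The one place to be careful is ensuring the Algorithm~\ref{alg:adam} step-size bound correctly incorporates the \Meth{} weight-decay term (this is where Lemma~\ref{lem:bounded_x} on $R$ enters multiplicatively with $\lam$), and ensuring the geometric-series evaluations produce the exact constants $\beta_1/(1-\beta_1)$ and $1/\sqrt{1+\beta_1}$ claimed in the statement.
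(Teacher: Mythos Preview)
Your proposal is correct and follows the same overall template as the paper (unroll the momentum, split into a leading geometric term, a drift term controlled by smoothness and the per-step displacement $\eta(C+\lam R)$, and a noise term handled via Fact~\ref{fact:martingale} plus Jensen). The one genuine difference is in the decomposition itself: the paper writes
\[
\vm_t-\nabla f(\vx_t)=-\beta_1^t\nabla f(\vx_1)+\sum_{k=1}^{t-1}\beta_1^{t-k}\bigl(\nabla f(\vx_k)-\nabla f(\vx_{k+1})\bigr)+(1-\beta_1)\sum_{k=1}^t\beta_1^{t-k}(\vg_k-\nabla f(\vx_k)),
\]
so the leading term carries $\nabla f(\vx_1)$ rather than $\nabla f(\vx_t)$, and the drift is a sum of \emph{consecutive} gradient differences with no $(1-\beta_1)$ prefactor. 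Consequently the paper bounds each drift summand by the single-step quantity $\eta Ld(C+\lam R)$ and sums $\sum_{j\ge 1}\beta_1^j\le\beta_1/(1-\beta_1)$, whereas you bound $\norm{\nabla f(\vx_t)-\nabla f(\vx_k)}_1$ by $(t-k)\eta Ld(C+\lam R)$ and sum $(1-\beta_1)\sum_{j\ge 0}j\beta_1^j\le\beta_1/(1-\beta_1)$. Both routes land on exactly the same constant; your version is arguably more direct (no Abel-summation identity to verify), while the paper's avoids tracking the $(t-k)$ factor. The stochastic term is handled identically in both.
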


\begin{proof}
    Note that \begin{equation}\label{eq:momentum}
        \vm_t-\nabla f(\vx_t)=-\beta_1^t\nabla f(\vx_1)+\sum_{k\in[t-1]}\beta_1^{t-k}(\nabla f(\vx_k)-\nabla f(\vx_{k+1}))+(1-\beta_1)\sum_{k\in[t]}\beta_1^{t-k}(\vg_k-\nabla f(\vx_k)).
    \end{equation} By smoothness, Lemma~\ref{lem:bounded_update}, and Assumption~\ref{assume:bounded}, we have \begin{equation}\label{eq:smooth_error}
        \norm{\nabla f(\vx_k)-\nabla f(\vx_{k+1})}_1\leq\sqrt{d}\norm{\nabla f(\vx_k)-\nabla f(\vx_{k+1})}_2\leq L\sqrt{d}\norm{\vx_{k+1}-\vx_k}_2\leq\eta Ld(C+\lam R).
    \end{equation} By Jensen's inequality and Fact~\ref{fact:martingale}, \begin{equation}\label{eq:gradient_error}
        \begin{aligned}
            \E\Brack{\Abs{\sum_{k\in[t]}\beta_1^{t-k}([\vg_k]_i-[\nabla f(\vx_k)]_i)}}&\leq\sqrt{\E\Brack{\Par{\sum_{k\in[t]}\beta_1^{t-k}([\vg_k]_i-[\nabla f(\vx_k)]_i)}^2}}\\
            &\leq\sqrt{\frac{\sig^2}{\nbatch}\sum_{k\in[t]}(\beta_1^2)^{t-k}}\leq\frac{\sig}{\sqrt{\nbatch(1-\beta_1^2)}}.
        \end{aligned}
    \end{equation} Taking $\E[\norm{\cdot}_1]$ of \eqref{eq:momentum} and applying \eqref{eq:smooth_error} and \eqref{eq:gradient_error}, \begin{align*}
        \E[\norm{\nabla f(\vx_t)-\vm_t}_1]&\leq\beta_1^t\norm{\nabla f(\vx_1)}_1+\frac{\beta_1\eta Ld(C+\lam R)}{1-\beta_1}+(1-\beta_1)\E\Brack{\norm{\sum_{k\in[t]}\beta_1^{t-k}(\vg_k-\nabla f(\vx_k))}_1}\\
        &\leq\beta_1^tGd+\frac{\beta_1\eta Ld(C+\lam R)}{1-\beta_1}+\frac{\sig d}{\sqrt{\nbatch(1+\beta_1)}},
    \end{align*} as desired.
\end{proof}

\begin{lemma}\label{lem:m_inprod}
    For all $t\in\N$, \[ \E\Brack{-\inprod{\nabla f(\vx_t)}{\frac{\vm_t}{\sqrt{\hvv_t}+\eps\1}}}\leq-\frac{\E\Brack{\norm{\nabla f(\vx_t)}_2^2}}{G+\eps}+\frac{\beta_1^tG^2d}{\eps}+\frac{\beta_1\eta GLd(C+\lam R)}{(1-\beta_1)\eps}+\frac{\sig Gd}{\eps\sqrt{\nbatch(1+\beta_1)}}. \]
\end{lemma}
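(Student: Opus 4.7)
\textbf{Proof proposal for Lemma~\ref{lem:m_inprod}.} The plan is to split $\vm_t$ into the true gradient plus a stochastic/momentum-lag error, handle each piece separately, and then invoke Lemma~\ref{lem:m_error} on the error term. Concretely, I would write
\[
-\inprod{\nabla f(\vx_t)}{\frac{\vm_t}{\sqrt{\hvv_t}+\eps\1}} = -\inprod{\nabla f(\vx_t)}{\frac{\nabla f(\vx_t)}{\sqrt{\hvv_t}+\eps\1}} + \inprod{\nabla f(\vx_t)}{\frac{\nabla f(\vx_t)-\vm_t}{\sqrt{\hvv_t}+\eps\1}},
\]
so the analysis reduces to bounding the diagonal quadratic form and a Hölder-type cross term.

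For the diagonal term, note that Lemma~\ref{lem:bounded_x} gives $\norm{\hvv_t}_\infty \leq G^2$, so coordinatewise $\sqrt{[\hvv_t]_i}+\eps \leq G+\eps$. Therefore
\[
-\inprod{\nabla f(\vx_t)}{\frac{\nabla f(\vx_t)}{\sqrt{\hvv_t}+\eps\1}} = -\sum_{i\in[d]} \frac{[\nabla f(\vx_t)]_i^2}{\sqrt{[\hvv_t]_i}+\eps} \leq -\frac{\norm{\nabla f(\vx_t)}_2^2}{G+\eps},
\]
which reproduces the first term of the claimed bound after taking expectations.

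For the cross term, I would use Hölder with the $\ell_\infty/\ell_1$ pairing and bound the denominator by $\eps$: since $\norm{\nabla f(\vx_t)}_\infty \leq G$ (again from Lemma~\ref{lem:bounded_x}) and $\sqrt{[\hvv_t]_i}+\eps \geq \eps$, we obtain
\[
\inprod{\nabla f(\vx_t)}{\frac{\nabla f(\vx_t)-\vm_t}{\sqrt{\hvv_t}+\eps\1}} \leq \frac{G}{\eps}\norm{\nabla f(\vx_t)-\vm_t}_1.
\]
Taking expectations and substituting Lemma~\ref{lem:m_error} yields exactly the three additive error terms $\tfrac{\beta_1^t G^2 d}{\eps}$, $\tfrac{\beta_1 \eta G L d(C+\lam R)}{(1-\beta_1)\eps}$, and $\tfrac{\sig G d}{\eps\sqrt{\nbatch(1+\beta_1)}}$.

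Combining the two estimates and taking expectations on both sides gives the statement. I do not anticipate any real obstacle here: the only mildly delicate points are making sure the direction of the coordinatewise bound on $\sqrt{\hvv_t}+\eps$ is used correctly (upper bound by $G+\eps$ for the negative quadratic, lower bound by $\eps$ for the error term), and that we use the raw moment $\vm_t$ rather than the bias-corrected $\hvm_t$ so that Lemma~\ref{lem:m_error} applies directly.
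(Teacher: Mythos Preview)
Your proposal is correct and matches the paper's proof essentially line for line: the paper also splits off $\nabla f(\vx_t)$ from $\vm_t$, uses $\sqrt{\hvv_t}+\eps\1\leq (G+\eps)\1$ for the negative quadratic, and bounds the cross term via $\norm{\frac{\nabla f(\vx_t)}{\sqrt{\hvv_t}+\eps\1}}_\infty\leq\frac{G}{\eps}$ together with Lemma~\ref{lem:m_error}. The only cosmetic difference is that the paper writes the H\"older factor as $\norm{\frac{\nabla f(\vx_t)}{\sqrt{\hvv_t}+\eps\1}}_\infty$ rather than separating it into $\norm{\nabla f(\vx_t)}_\infty$ and the $\eps$ lower bound on the denominator.
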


\begin{proof}
    We have \begin{align*}
        -\inprod{\nabla f(\vx_t)}{\frac{\vm_t}{\sqrt{\hvv_t}+\eps\1}}&=\inprod{\frac{\nabla f(\vx_t)}{\sqrt{\hvv_t}+\eps\1}}{\nabla f(\vx_t)-\vm_t-\nabla f(\vx_t)}\\
        &\leq-\frac{1}{G+\eps}\norm{\nabla f(\vx_t)}_2^2+\inprod{\frac{\nabla f(\vx_t)}{\sqrt{\hvv_t}+\eps\1}}{\nabla f(\vx_t)-\vm_t}\\
        &\leq-\frac{1}{G+\eps}\norm{\nabla f(\vx_t)}_2^2+\norm{\frac{\nabla f(\vx_t)}{\sqrt{\hvv_t}+\eps\1}}_\infty\norm{\nabla f(\vx_t)-\vm_t}_1
    \end{align*}
    The result follows by $\norm{\frac{\nabla f(\vx_t)}{\sqrt{\hvv_t}+\eps\1}}_\infty\leq\frac{G}{\eps}$ and Lemma~\ref{lem:m_error} .
\end{proof}

\begin{lemma}\label{lem:ind}
    For all $m,g,x\in\R$, 
    \[ |(\ind(mx\geq0)-\ind(gx\geq0))x|\leq\ind(mg\leq0)|x|. \]
\end{lemma}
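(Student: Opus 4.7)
My plan is to reduce the claim to a simple finite-case verification. Since both indicators take values in $\{0,1\}$, the quantity $|\ind(mx\geq 0)-\ind(gx\geq 0)|$ is itself an indicator, equal to $1$ precisely when the two indicators disagree and $0$ otherwise. Hence both sides of the claimed inequality lie in $\{0,|x|\}$, and it suffices to show the implication
\[
\ind(mx\geq 0)\neq \ind(gx\geq 0)\quad\Longrightarrow\quad mg\leq 0,
\]
after first dispensing with the trivial case $x=0$, where both sides are $0$.

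The key reduction is that, for $x\neq 0$, the sign of $x$ controls how the condition $mx\geq 0$ depends on $m$. I split into two cases. If $x>0$, then $\ind(mx\geq 0)=\ind(m\geq 0)$ and similarly for $g$, so the two indicators disagree exactly when one of $m,g$ is nonnegative while the other is strictly negative; in either subcase, the product $mg$ has a nonpositive factor and a nonnegative factor, giving $mg\leq 0$. If $x<0$, the roles flip: $\ind(mx\geq 0)=\ind(m\leq 0)$, and the same argument yields $mg\leq 0$ whenever the two indicators disagree.

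Putting this together, whenever the left-hand side is nonzero, one has $mg\leq 0$, which forces $\ind(mg\leq 0)=1$ and hence the right-hand side equals $|x|$, matching the left-hand side. When the two indicators agree, the left-hand side is $0$ and the inequality is automatic. I do not expect any substantive obstacle: the proof is a clean case analysis on the sign of $x$, with the only bookkeeping subtlety being the asymmetric convention $\{mx\geq 0\}$ vs.\ $\{mg\leq 0\}$ (strict vs.\ non-strict), which is precisely what makes the boundary cases (e.g.\ $m=0$ and $g<0$) consistent, since such boundary cases still satisfy $mg\leq 0$.
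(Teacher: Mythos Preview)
Your proof is correct and follows essentially the same approach as the paper: both reduce to the observation that the difference of the two indicators lies in $\{-1,0,1\}$ and that whenever the indicators disagree one must have $mg\leq 0$. The only cosmetic difference is that the paper organizes the casework on the sign of $mg$ (if $mg>0$ then $m,g$ share a sign so the conditions $mx\geq 0$ and $gx\geq 0$ coincide; if $mg\leq 0$ then the right-hand side is already $|x|$), whereas you split on the sign of $x$; the logical content is identical.
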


\begin{proof}
    If $x=0$, then the inequality is trivially valid, so suppose $x\neq0$. We proceed by casework on the sign of $mg$.
    
    If $mg>0$, then $m$ and $g$ have the same sign, and the conditions $mx\geq0$ and $gx\geq0$ are equivalent. Thus $\ind(mx\geq0)-\ind(gx\geq0)=0$, and the inequality holds.
    
    If $mg\leq0$, then $\ind(mg\leq0)=1$. It remains to show $|(\ind(mx\geq0)-\ind(gx\geq0))x|\leq|x|$, which follows upon realizing $\ind(mx\geq0)-\ind(gx\geq0)\in\{-1,0,1\}$.
\end{proof}

\begin{lemma}\label{lem:1x_inprod}
    For all $t\in\N$, \[ \E[-\inprod{\nabla f(\vx_t)}{\ind(\vm_t\vx_t\geq\0)\vx_t}]\leq-\E\Brack{\norm{(\nabla f(\vx_t)\vx_t)^+}_1}+\beta_1^tGRd+\frac{\beta_1\eta LRd(C+\lam R)}{1-\beta_1}+\frac{\sig Rd}{\sqrt{\nbatch(1+\beta_1)}}. \]
\end{lemma}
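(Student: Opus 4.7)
The plan is to decompose the inner product by comparing the momentum-based mask $\ind(\vm_t\vx_t\geq\0)$ with the ``oracle'' gradient-based mask $\ind(\nabla f(\vx_t)\vx_t\geq\0)$, writing
\[
-\inprod{\nabla f(\vx_t)}{\ind(\vm_t\vx_t\geq\0)\vx_t} = -\inprod{\nabla f(\vx_t)}{\ind(\nabla f(\vx_t)\vx_t\geq\0)\vx_t} + E_t,
\]
where
\[
E_t \defeq -\inprod{\nabla f(\vx_t)}{(\ind(\vm_t\vx_t\geq\0) - \ind(\nabla f(\vx_t)\vx_t\geq\0))\vx_t}
\]
captures the sign-mismatch error. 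By the definition of the positive part, the first piece equals exactly $-\norm{(\nabla f(\vx_t)\vx_t)^+}_1$, matching the leading term on the right-hand side of the lemma.

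To bound $E_t$, I would apply Lemma~\ref{lem:ind} coordinatewise with $m = [\vm_t]_i$, $g = [\nabla f(\vx_t)]_i$, and $x = [\vx_t]_i$, which gives
\[
\Abs{[(\ind(\vm_t\vx_t\geq\0) - \ind(\nabla f(\vx_t)\vx_t\geq\0))\vx_t]_i} \leq \ind([\vm_t]_i [\nabla f(\vx_t)]_i \leq 0)\,\Abs{[\vx_t]_i}.
\]
The key observation is that whenever $[\vm_t]_i$ and $[\nabla f(\vx_t)]_i$ have opposite signs or either vanishes, we have $\Abs{[\nabla f(\vx_t)]_i} \leq \Abs{[\nabla f(\vx_t)]_i - [\vm_t]_i}$, since in all these cases the two terms on the right cannot partially cancel each other. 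Combined with $\norm{\vx_t}_\infty \leq R$ from Lemma~\ref{lem:bounded_x}, summing over coordinates yields the pointwise bound $E_t \leq R\,\norm{\nabla f(\vx_t) - \vm_t}_1$.

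Taking expectations and invoking Lemma~\ref{lem:m_error} to bound $\E[\norm{\nabla f(\vx_t) - \vm_t}_1]$ then supplies the three remaining terms on the right-hand side with the exact constants claimed. The main subtlety is the coordinate-level sign-mismatch argument that converts the indicator gap into a bound by $|\nabla f(\vx_t) - \vm_t|$; once this reduction is carried out, the conclusion follows by a direct assembly of previously proven estimates.
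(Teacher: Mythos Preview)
Your proposal is correct and follows essentially the same decomposition as the paper: split off the oracle-mask term $-\norm{(\nabla f(\vx_t)\vx_t)^+}_1$, apply Lemma~\ref{lem:ind} to reduce the error to coordinates where $[\vm_t]_i[\nabla f(\vx_t)]_i\le 0$, and then close via Lemma~\ref{lem:m_error}. The one place you streamline the paper is in handling the sign-mismatch term: the paper takes a conditional expectation, uses the event inclusion $\{[\vm_t]_i[\nabla f(\vx_t)]_i\le 0\}\subseteq\{|[\nabla f(\vx_t)]_i-[\vm_t]_i|\ge|[\nabla f(\vx_t)]_i|\}$, and then applies Markov's inequality to get $|[\nabla f(\vx_t)]_i|\cdot\Pr(\cdot)\le\E[|[\nabla f(\vx_t)]_i-[\vm_t]_i|]$; you instead use the same underlying algebraic fact directly as the pointwise inequality $\ind(ab\le 0)\,|a|\le|a-b|$, which yields $E_t\le R\norm{\nabla f(\vx_t)-\vm_t}_1$ before any expectation is taken. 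Both routes land on the identical bound, but yours avoids the detour through Markov.
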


\begin{proof}
    We have \begin{equation}\label{eq:1x}
        \begin{aligned}
            -\inprod{\nabla f(\vx_t)}{\ind(\vm_t\vx_t\geq\0)\vx_t}&=-\inprod{\nabla f(\vx_t)}{\ind(\vx_t\nabla f(\vx_t)\geq\0)\vx_t+(\ind(\vm_t\vx_t\geq\0)-\ind(\vx_t\nabla f(\vx_t)\geq\0))\vx_t}\\
            &=\inprod{\nabla f(\vx_t)}{(\ind(\vx_t\nabla f(\vx_t)\geq\0)-\ind(\vm_t\vx_t\geq\0))\vx_t}-\norm{(\nabla f(\vx_t)\vx_t)^+}_1\\
            &\leq\inprod{|\nabla f(\vx_t)|}{|(\ind(\vx_t\nabla f(\vx_t)\geq\0)-\ind(\vm_t\vx_t\geq\0))\vx_t|}-\norm{(\nabla f(\vx_t)\vx_t)^+}_1\\
            &\leq\inprod{|\nabla f(\vx_t)|}{\ind(\vm_t\nabla f(\vx_t)\leq\0)|\vx_t|}-\norm{(\nabla f(\vx_t)\vx_t)^+}_1,
        \end{aligned}
    \end{equation} where the fourth line uses Lemma~\ref{lem:ind}. Taking the expectation of \eqref{eq:1x} conditioned on $\vx_t$ and expanding the inner product, \begin{equation}\label{eq:1x_expect}
        \begin{aligned}
            \E[\inprod{|\nabla f(\vx_t)|}{\ind(\vm_t\nabla f(\vx_t)\leq\0)|\vx_t|}\mid\vx_t]&=\inprod{|\nabla f(\vx_t)|}{\E[\ind(\vm_t\nabla f(\vx_t)\leq\0)\mid\vx_t]|\vx_t|}\\
            &=\sum_{i\in[d]}|[\nabla f(\vx_t)]_i[\vx_t]_i|\cdot\E[\ind([\vm_t]_i[\nabla f(\vx_t)]_i\leq0)\mid\vx_t]\\
            &=\sum_{i\in[d]}|[\nabla f(\vx_t)]_i[\vx_t]_i|\cdot\Pr([\vm_t]_i[\nabla f(\vx_t)]_i\leq0\mid\vx_t)\\
            &\leq\sum_{i\in[d]}|[\nabla f(\vx_t)]_i[\vx_t]_i|\cdot\Pr(|[\nabla f(\vx_t)]_i-[\vm_t]_i|\geq|[\nabla f(\vx_t)]_i|\mid\vx_t)\\
            &\leq\sum_{i\in[d]}|[\vx_t]_i|\cdot\E[|[\nabla f(\vx_t)]_i-[\vm_t]_i|\mid\vx_t]\\
            &\leq R\cdot\E[\norm{\nabla f(\vx_t)-\vm_t}_1\mid\vx_t],
        \end{aligned}
    \end{equation} where the fifth line uses Markov's inequality. Taking the expectation of \eqref{eq:1x_expect} and applying Lemma~\ref{lem:m_error}, \[ \E[-\inprod{\nabla f(\vx_t)}{\ind(\vm_t\vx_t\geq\0)\vx_t}]\leq-\E\Brack{\norm{(\nabla f(\vx_t)\vx_t)^+}_1}+\beta_1^tGRd+\frac{\beta_1\eta LRd(C+\lam R)}{1-\beta_1}+\frac{\sig Rd}{\sqrt{\nbatch(1+\beta_1)}}, \] as desired.
\end{proof}

We are now ready to prove Theorem~\ref{thm:rate}.

\begin{proof}[Proof of Theorem~\ref{thm:rate}]
    Let \[ \Delta_t\defeq f(\vx_{t+1})-f(\vx_t)\quad\text{and}\quad\vdelta_t\defeq\frac{\hvm_t}{\sqrt{\hvv_t}+\eps\1}+\lam\ind(\vm_t\vx_t\geq\0)\vx_t. \] By smoothness, \begin{equation}\label{eq:smoothness}
        \begin{aligned}
            \Delta_t&\leq\inprod{\nabla f(\vx_t)}{\vx_{t+1}-\vx_t}+\frac{L}{2}\norm{\vx_{t+1}-\vx_t}_2^2\\
            &=-\eta\inprod{\nabla f(\vx_t)}{\vdelta_t}+\frac{\eta^2L}{2}\norm{\vdelta_t}_2^2\\
            &=-\eta\inprod{\nabla f(\vx_t)}{\frac{\hvm_t}{\sqrt{\hvv_t}+\eps\1}}-\eta\lam\inprod{\nabla f(\vx_t)}{\ind(\vm_t\vx_t\geq\0)\vx_t}+\frac{\eta^2L}{2}\norm{\vdelta_t}_2^2\\
            &=-\frac{\eta}{1-\beta_1^t}\inprod{\nabla f(\vx_t)}{\frac{\vm_t}{\sqrt{\hvv_t}+\eps\1}}-\eta\lam\inprod{\nabla f(\vx_t)}{\ind(\vm_t\vx_t\geq\0)\vx_t}+\frac{\eta^2L}{2}\norm{\vdelta_t}_2^2.
        \end{aligned}
    \end{equation} Taking the expectation of \eqref{eq:smoothness} and applying Lemmas~\ref{lem:bounded_update}, \ref{lem:m_inprod}, and \ref{lem:1x_inprod}, \begin{equation}\label{eq:descent}
        \begin{aligned}
            \E[\Delta_t]&\leq\frac{\eta}{1-\beta_1^t}\Par{-\frac{\E\Brack{\norm{\nabla f(\vx_t)}_2^2}}{G+\eps}+\frac{\beta_1^tG^2d}{\eps}+\frac{\beta_1\eta GLd(C+\lam R)}{(1-\beta_1)\eps}+\frac{\sig Gd}{\eps\sqrt{\nbatch(1+\beta_1)}}}\\
            &\quad+\eta\lam\Par{-\E\Brack{\norm{(\nabla f(\vx_t)\vx_t)^+}_1}+\beta_1^tGRd+\frac{\beta_1\eta LRd(C+\lam R)}{1-\beta_1}+\frac{\sig Rd}{\sqrt{\nbatch(1+\beta_1)}}}\\
            &\quad+\eta^2 L(C^2d+\lam^2R^2d).
        \end{aligned}
    \end{equation} We can assume $G\geq\frac{1}{1-\beta_1}$ without loss of generality. Rearranging \eqref{eq:descent}, using $1-\beta_1^t\leq1$ and $(1-\beta_1^t)(G+\eps)\geq1$, summing over $T$ iterations, and dividing both sides by $T$ gives \begin{align*}
        \frac{1}{T}\sum_{t\in[T]}\E[\calS(\vx_t)]&\leq\frac{G+\eps}{\eta T}(f(\vx_1)-f^\star)+\frac{G+\eps}{T}\sum_{t\in[T]}\frac{\beta_1^tG^2d}{\eps}+\frac{\beta_1\eta GLd(C+\lam R)(G+\eps)}{(1-\beta_1)\eps}\\
        &\quad+\frac{\sig Gd(G+\eps)}{\eps\sqrt{\nbatch(1+\beta_1)}}+\frac{\lam(G+\eps)}{T}\sum_{t\in[T]}\beta_1^tGRd+\frac{\lam\sig Rd(G+\eps)}{\sqrt{\nbatch(1+\beta_1)}}\\
        &\quad+\frac{\beta_1\eta\lam LRd(C+\lam R)(G+\eps)}{1-\beta_1}+\eta L(G+\eps)(C^2d+\lam^2R^2d)\\
        &\leq\frac{K_1}{\eta T}+\frac{K_2}{T}+K_3\eta+\frac{K_4\sig}{\sqrt{\nbatch}},
    \end{align*} where the fourth line uses $\sum_{t\in[T]}\beta_1^t\leq\frac{\beta_1}{1-\beta_1}$ and \begin{align*}
        \calS(\vx_t)&\defeq\norm{\nabla f(\vx_t)}_2^2+\lam\norm{(\nabla f(\vx_t)\vx_t)^+}_1\\
        K_1&\defeq(G+\eps)(f(\vx_1)-f^\star)\\
        K_2&\defeq\frac{\beta_1Gd(G+\eps)}{1-\beta_1}\Par{\frac{G}{\eps}+\lam R}\\
        K_3&\defeq\frac{\beta_1Ld(C+\lam R)(G+\eps)}{1-\beta_1}\Par{\frac{G}{\eps}+\lam R}+Ld(C^2+\lam^2R^2)(G+\eps)\\
        K_4&\defeq\frac{d(G+\eps)}{\sqrt{1+\beta_1}}\Par{\frac{G}{\eps}+\lam R}.
    \end{align*}
\end{proof}

\section{Model \& Experiment Configurations}
\label{app:configs}
We evaluate \method{} (\Meth{}) across two experimental setups: (1) transformer models ranging from 111M to 2.3B parameters, and (2) the OLMo-1B architecture. All models employ SwiGLU activations and rotary position embeddings (RoPE). To ensure fair comparison, we conduct extensive grid searches to optimize hyperparameters for each baseline optimizer (\adamw{}, \lion{}, and \muon{}) before applying \Meth{} with identical settings. Table~\ref{tab:model_configs} details the scaled model configurations, Table~\ref{tab:model_config} presents the OLMo-1B architecture, and the following subsection describes our hyperparameter search methodology.

\begin{table}[t]
\centering
\caption{Hyperparameter configurations for the different model sizes. All models use an expansion factor of 8 and a vocabulary size of 100,864.}
\label{tab:model_configs}
\begin{tabular}{lcccc}
\toprule
\textbf{Hyperparameter} & \textbf{2.3B Model} & \textbf{986M Model} & \textbf{338M Model} & \textbf{111M Model} \\
\midrule
\multicolumn{5}{l}{\textit{Model Architecture}} \\
Total Parameters & 2,321.38M & 985.89M & 338.44M & 110.55M \\
Model Dimension & 2048 & 1536 & 1024 & 512 \\
Number of Layers & 18 & 12 & 8 & 8 \\
Number of Heads & 8 & 8 & 8 & 8 \\
Per Head Dimension & 256 & 256 & 128 & 64 \\
Sequence Length & 2048 & 2048 & 2048 & 2048 \\
\midrule
\multicolumn{5}{l}{\textit{Validation Setup}} \\
Evaluation Batch Size & 1024 & 512 & 128 & 256 \\
Number of Eval Steps & 2 & 4 & 4 & 8 \\
Evaluation Interval & 1000 steps & 1000 steps & 500 steps & 500 steps \\
\bottomrule
\end{tabular}
\end{table}

We conducted an extensive grid search to determine optimal hyperparameters for \adamw{}, \lion{}, and \muon{} optimizers. Our learning rate search employed a quasi-logarithmic grid spanning four orders of magnitude from $1 \times 10^{-5}$ to $1 \times 10^{-1}$, with denser sampling in the critical $10^{-4}$ to $10^{-2}$ range where transformer models typically achieve optimal performance. The grid included standard decade values (e.g., $0.001$, $0.01$) as well as intermediate points within each logarithmic interval (e.g., $0.2$, $0.3$, $0.5$, $0.8$ scaled to each decade) to capture potential performance peaks between order-of-magnitude boundaries, totaling 24 distinct learning rate values. For the learning rate schedule, we systematically evaluated warmup ratios of $\{0, 0.05, 0.1, 0.2, 0.3, 0.4, 0.5\}$, corresponding to 0\% to 50\% of total training steps dedicated to linear warmup, followed by cosine annealing decay. For \adamw{}, we additionally performed a grid search over the momentum parameters $\beta_1$ and $\beta_2$, evaluating combinations of $\beta_1 \in \{0.85, 0.9, 0.95\}$ and $\beta_2 \in \{0.95, 0.98, 0.99, 0.995, 0.999\}$. Our experiments identified $\beta_1 = 0.9$ and $\beta_2 = 0.95$ as the optimal configuration. For \lion{}, we swept $\beta_1 \in \{0.85, 0.9, 0.95\}$ and $\beta_2 \in \{0.95, 0.98, 0.99\}$, finding $\beta_1 = 0.9$ and $\beta_2 = 0.95$ to be optimal. For \muon{}, we similarly swept momentum coefficients and confirmed $0.95$ as optimal.

\begin{table}[t]
\centering
\caption{Model Architecture Configuration for OLMo-1B}
\label{tab:model_config}
\begin{tabular}{ll}
\toprule
\textbf{Hyperparameter} & \textbf{Value} \\
\midrule
\multicolumn{2}{l}{\textit{Architecture}} \\
Hidden dimension ($d_{\text{model}}$) & 2048 \\
Number of attention heads & 16 \\
Number of layers & 16 \\
MLP ratio & 8 \\
Vocabulary size & 50,280 \\
Embedding size & 50,304 \\
Max sequence length & 2048 \\
\midrule
\multicolumn{2}{l}{\textit{Attention Mechanism}} \\
Positional encoding & RoPE \\
Flash attention & \checkmark \\
Multi-query attention & \ding{55} \\
ALiBi & \ding{55} \\
Attention dropout & 0.0 \\
Attention layer norm & \ding{55} \\
\midrule
\multicolumn{2}{l}{\textit{Model Components}} \\
Activation function & SwiGLU \\
Block type & Sequential \\
Weight tying & \checkmark \\
Include bias & \ding{55} \\
Layer norm type & Default \\
Layer norm with affine & \ding{55} \\
Residual dropout & 0.0 \\
Embedding dropout & 0.0 \\
\midrule
\multicolumn{2}{l}{\textit{Initialization}} \\
Initialization method & Mitchell \\
Initialization device & CUDA \\
\bottomrule
\end{tabular}
\end{table}

\section{Additional Experiment Results}
\label{ssec::addtitional_results}

This section provides supplementary experimental analyses that further characterize the behavior of \method{} (\Meth{}) across different optimizers and training dynamics. We present detailed visualizations of the mask activation patterns (Figure~\ref{fig:ratio}), showing how the fraction of parameters receiving weight decay evolves during training for both \adamw{} and \muon{} optimizers. Additionally, we include comprehensive loss and accuracy curves for all three optimizers (\adamw{}, \lion{}, and \muon{}) across model scales from 111M to 2.3B parameters (Figures~\ref{fig:muon_loss}--\ref{fig:lion_loss}), demonstrating consistent improvements with \Meth{}. Finally, Figure~\ref{fig:param_norm} tracks the evolution of parameter norms throughout training, revealing that \Meth{} maintains stable regularization comparable to standard weight decay while achieving superior performance.

\begin{figure}[t]
  \centering
  \begin{subfigure}{0.49\textwidth} 
    \centering
    \includegraphics[width=\linewidth]{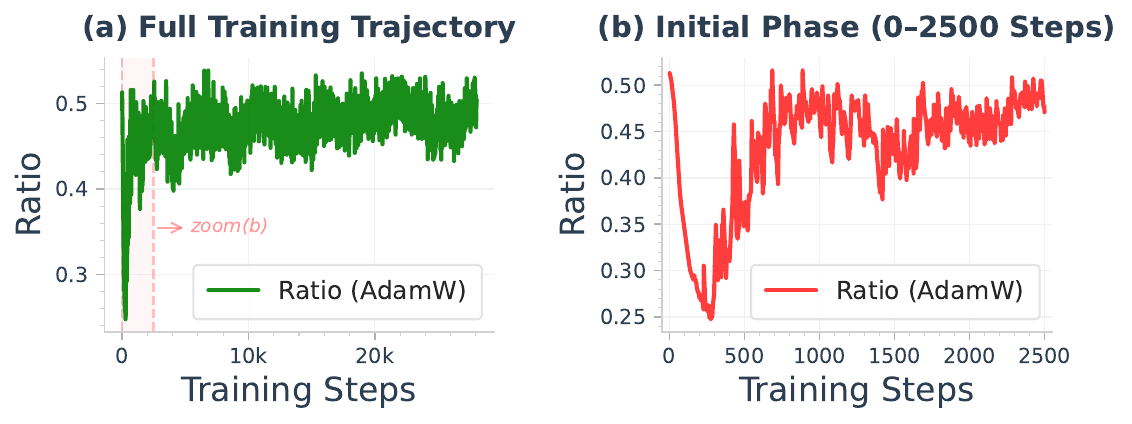}
  \end{subfigure}%
  \hfill
  \begin{subfigure}{0.49\textwidth} 
    \centering
    \includegraphics[width=\linewidth]{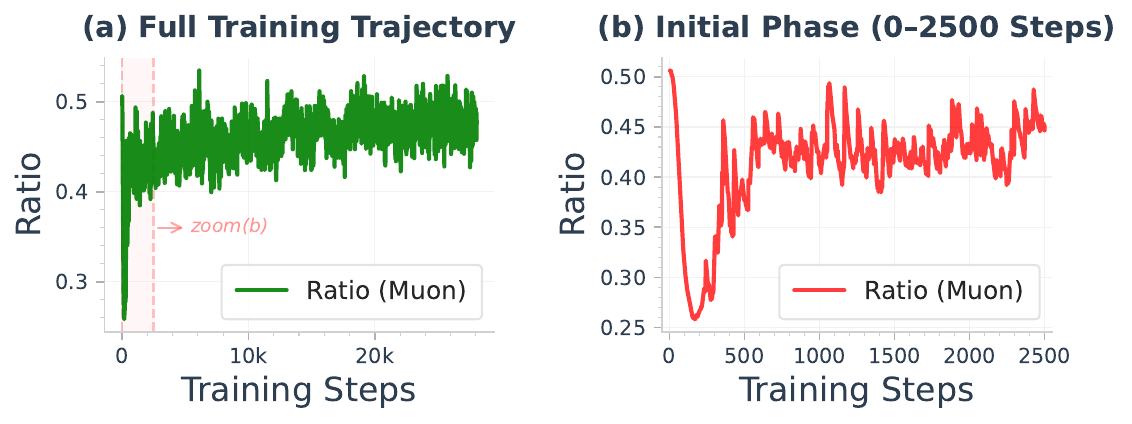}
  \end{subfigure}
\caption{Masked weight-decay activation ratio $r_t:=\frac{\norm{\ind(\vu_t\vx_t>\0)}_1}{d}$, i.e., the fraction of parameters for which the sign-selective mask is active at step $t$ ($d$ = number of parameters). Left: \adamw{}; right: \muon{}. Insets zoom into the first 2.5k steps to highlight early-training behavior. Model: Qwen-0.6B~\citep{Yang25} trained on The Pile~\citep{Gao21}.}
\vspace{-.8\baselineskip}
  \label{fig:ratio}
\end{figure}
\begin{table}[h]
\centering
\label{tab:acc_loss_comparison_fullwidth}
\begin{tabular*}{\linewidth}{@{\extracolsep{\fill}}lcccccc}
\toprule
\multicolumn{7}{c}{\textbf{Accuracy (higher is better)}}\\
\midrule
GPT & \multicolumn{2}{c}{\adamw{}} & \multicolumn{2}{c}{\lion{}} & \multicolumn{2}{c}{\muon{}} \\
\cmidrule(lr){2-3} \cmidrule(lr){4-5} \cmidrule(lr){6-7}
Model Size & \textbf{Ours} & Base & \textbf{Ours} & Base & \textbf{Ours} & Base \\
\midrule
338M & \textbf{0.4232} & 0.4221 & \textbf{0.4230} & 0.4211 & \textbf{0.4256} & 0.4252 \\
986M & \textbf{0.4566} & 0.4556 & \textbf{0.4552} & 0.4545 & \textbf{0.4589} & 0.4575 \\
2B   & \textbf{0.4847} & 0.4831 & \textbf{0.4839} & 0.4830 & \textbf{0.4873} & 0.4858 \\
\midrule
\midrule
\multicolumn{7}{c}{\textbf{Loss (lower is better)}}\\
\midrule
GPT & \multicolumn{2}{c}{\adamw{}} & \multicolumn{2}{c}{\lion{}} & \multicolumn{2}{c}{\muon{}} \\
\cmidrule(lr){2-3} \cmidrule(lr){4-5} \cmidrule(lr){6-7}
Model Size & \textbf{Ours} & Base & \textbf{Ours} & Base & \textbf{Ours} & Base \\
\midrule
338M & \textbf{3.0059} & 3.0136 & \textbf{3.0012} & 3.0121 & \textbf{2.9851} & 2.9896 \\
986M & \textbf{2.7053} & 2.7142 & \textbf{2.7171} & 2.7231 & \textbf{2.6873} & 2.6968 \\
2B   & \textbf{2.4881} & 2.4973 & \textbf{2.4961} & 2.5012 & \textbf{2.4703} & 2.4803 \\
\bottomrule
\end{tabular*}
\caption{Final evaluation \emph{accuracy} (higher is better) and \emph{loss} (lower is better) comparisons across different model sizes, expanded to the full text width. Our proposed method is benchmarked against three baseline optimizers: \adamw{}, \lion{}, and \muon{}. The best result in each pair is \textbf{bolded}.}
\end{table}

\begin{figure*}[h]
  \centering
  \begin{subfigure}[b]{0.32\textwidth}
    \centering
    \includegraphics[width=\textwidth]{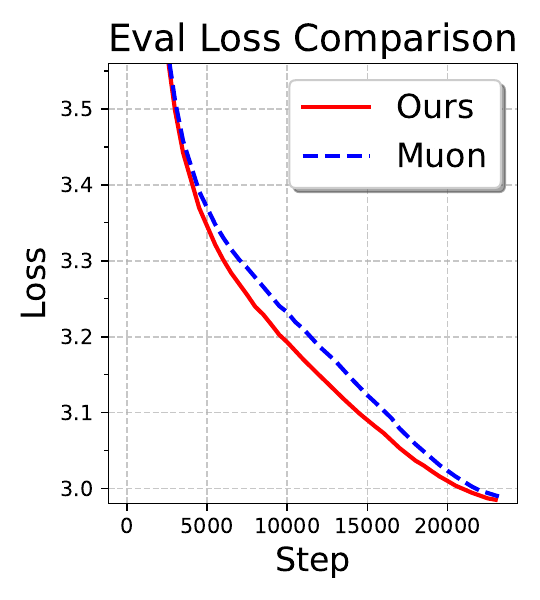}
    \caption{338M parameters}
    \label{fig:muon_338m}
  \end{subfigure}
  \hfill
  \begin{subfigure}[b]{0.32\textwidth}
    \centering
    \includegraphics[width=\textwidth]{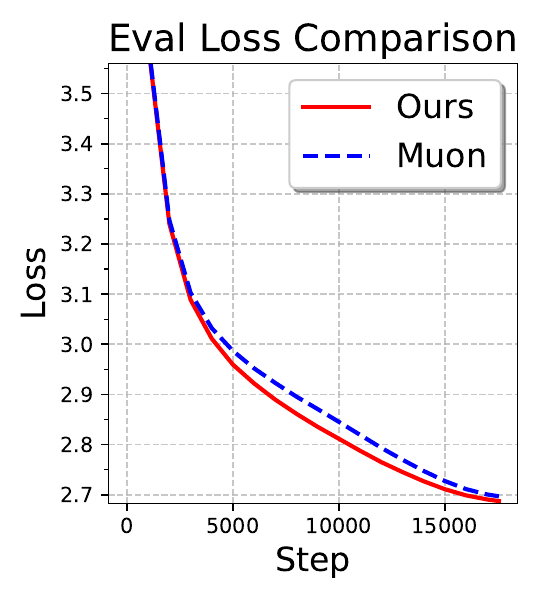}
    \caption{986M parameters}
    \label{fig:muon_986m}
  \end{subfigure}
  \hfill
  \begin{subfigure}[b]{0.32\textwidth}
    \centering
    \includegraphics[width=\textwidth]{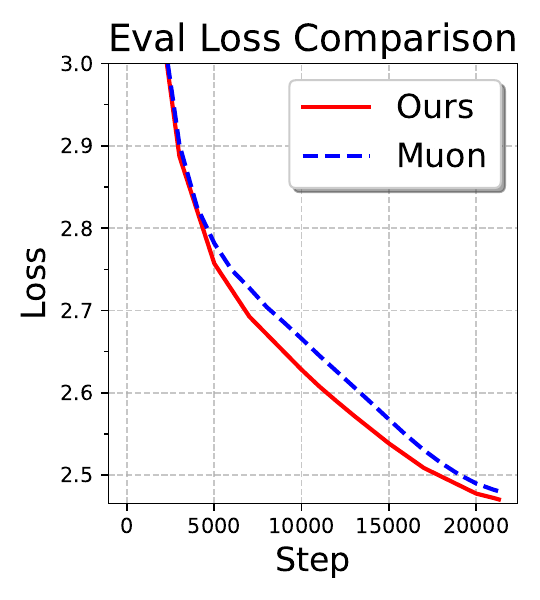}
    \caption{2B parameters}
    \label{fig:muon_2b}
  \end{subfigure}
  \caption{\textbf{Training dynamics across model scales with \muon{} optimizer.} Baseline \muon{} (dashed) vs. \muon{} with \Meth{} (solid). }
  \label{fig:muon_loss}
\end{figure*}

\begin{figure*}[h]
  \centering
  \begin{subfigure}[b]{0.32\textwidth}
    \centering
    \includegraphics[width=\textwidth]{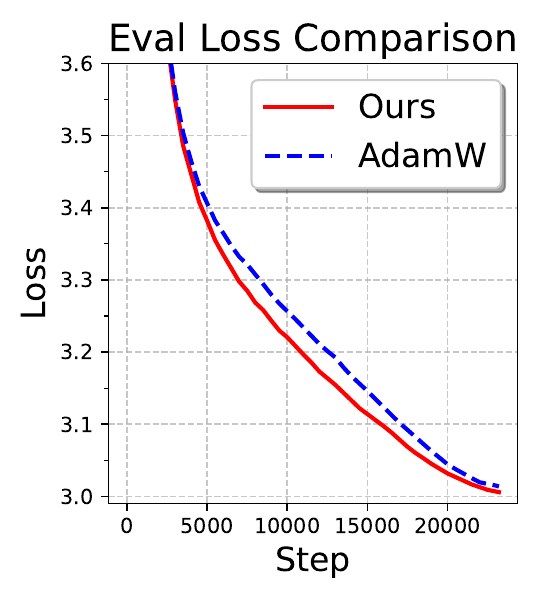}
    \caption{338M parameters}
    \label{fig:adam_338m}
  \end{subfigure}
  \hfill
  \begin{subfigure}[b]{0.32\textwidth}
    \centering
    \includegraphics[width=\textwidth]{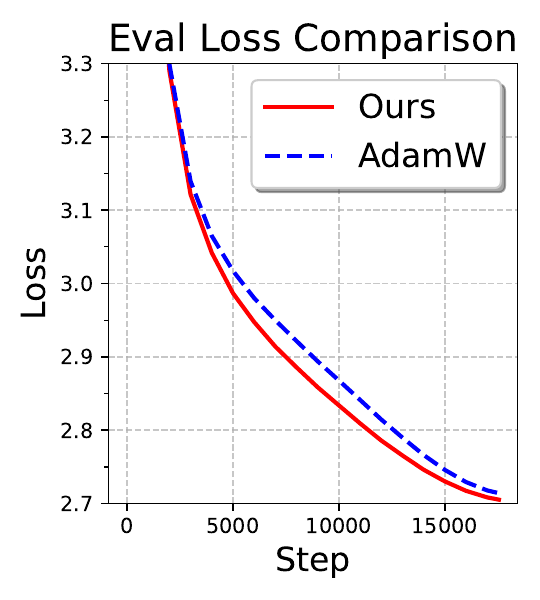}
    \caption{986M parameters}
    \label{fig:adam_986m}
  \end{subfigure}
  \hfill
  \begin{subfigure}[b]{0.32\textwidth}
    \centering
    \includegraphics[width=\textwidth]{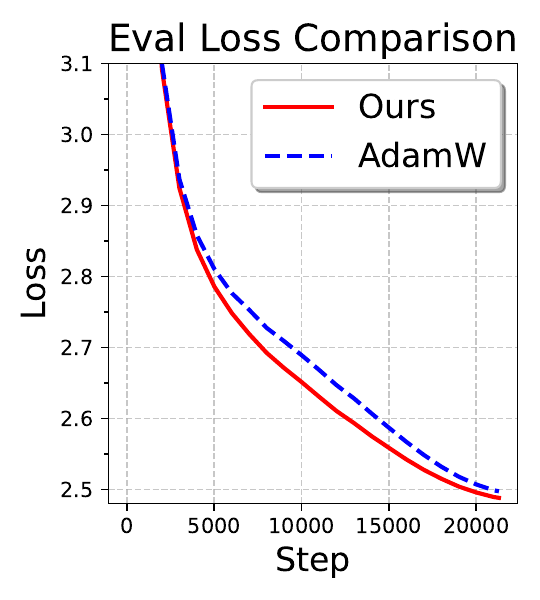}
    \caption{2B parameters}
    \label{fig:adam_2b}
  \end{subfigure}
  \caption{\textbf{Training dynamics across model scales with \adamw{} optimizer.} We compare baseline \adamw{} (dashed) against \adamw{} with \Meth{} (solid) on models ranging from 338M to 2B parameters. }
  \vspace{-1.8\baselineskip}
  \label{fig:adam_loss}
\end{figure*}

\begin{figure*}[h]
  \centering
  \begin{subfigure}[b]{0.32\textwidth}
    \centering
    \includegraphics[width=\textwidth]{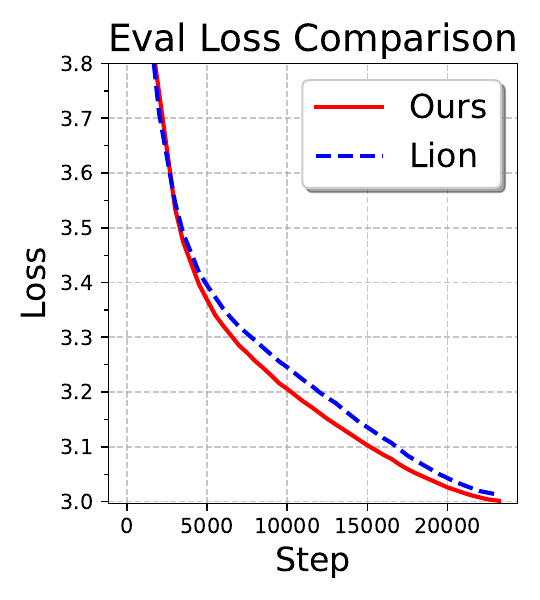}
    \caption{338M parameters}
    \label{fig:lion_338m}
  \end{subfigure}
  \hfill
  \begin{subfigure}[b]{0.32\textwidth}
    \centering
    \includegraphics[width=\textwidth]{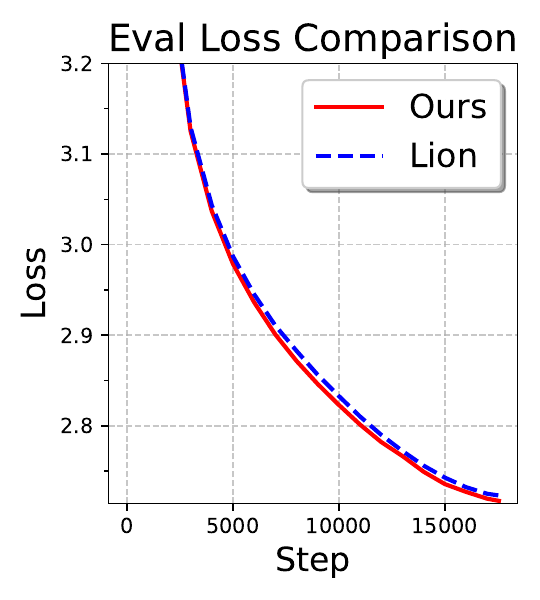}
    \caption{986M parameters}
    \label{fig:lion_986m}
  \end{subfigure}
  \hfill
  \begin{subfigure}[b]{0.32\textwidth}
    \centering
    \includegraphics[width=\textwidth]{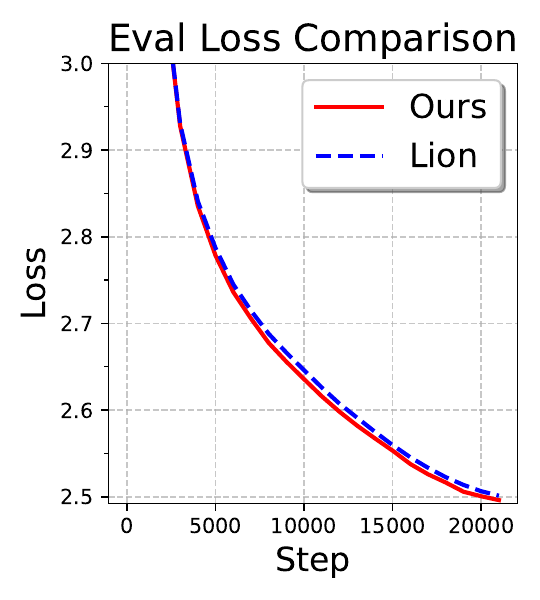}
    \caption{2B parameters}
    \label{fig:lion_2b}
  \end{subfigure}
  \caption{\textbf{Training dynamics across model scales with \lion{} optimizer.} Baseline \lion{} (dashed) vs. \lion{} with \Meth{} (solid).}
  \label{fig:lion_loss}
\end{figure*}

\begin{figure*}[t]
  \centering
  \begin{minipage}{0.9\textwidth}
    \centering
    \includegraphics[width=0.33\linewidth]{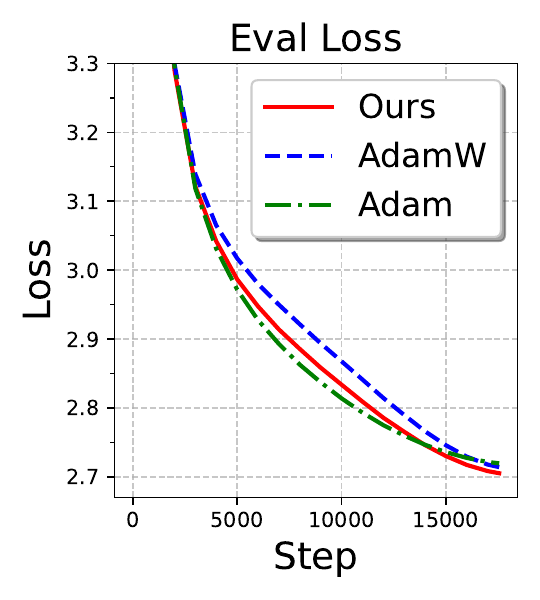}\hfill
    \includegraphics[width=0.33\linewidth]{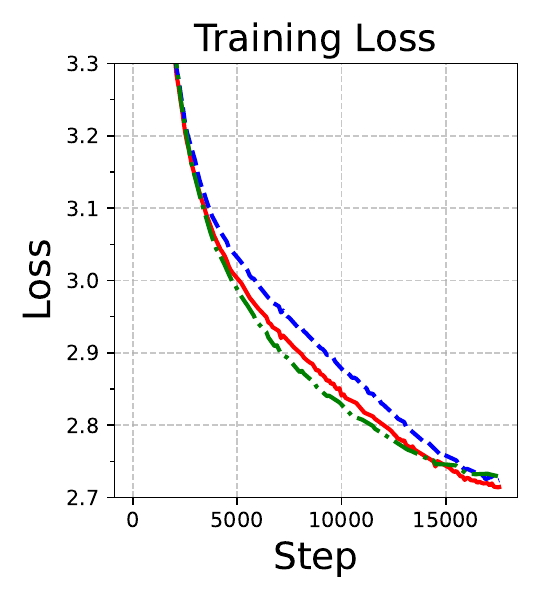}\hfill
    \includegraphics[width=0.327\linewidth]{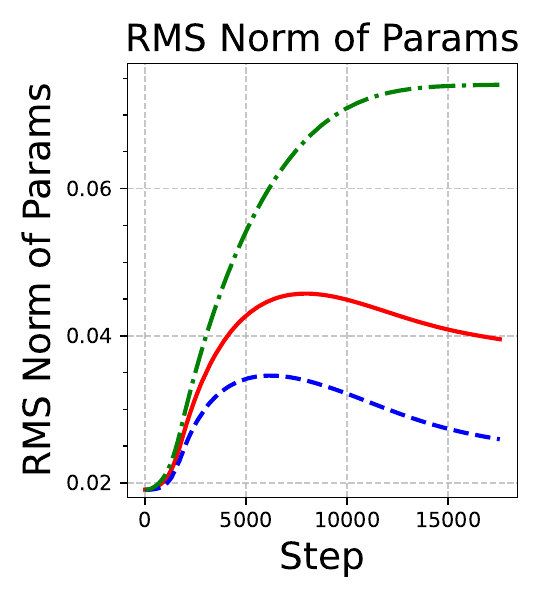}
  \end{minipage}
  \vspace{-1.\baselineskip}
  \caption{Training dynamics for the 986M-parameter Gemma model.}
  \label{fig:train_loss}
\end{figure*}

\begin{figure}[t]
  \centering
  \includegraphics[width=0.99\textwidth]{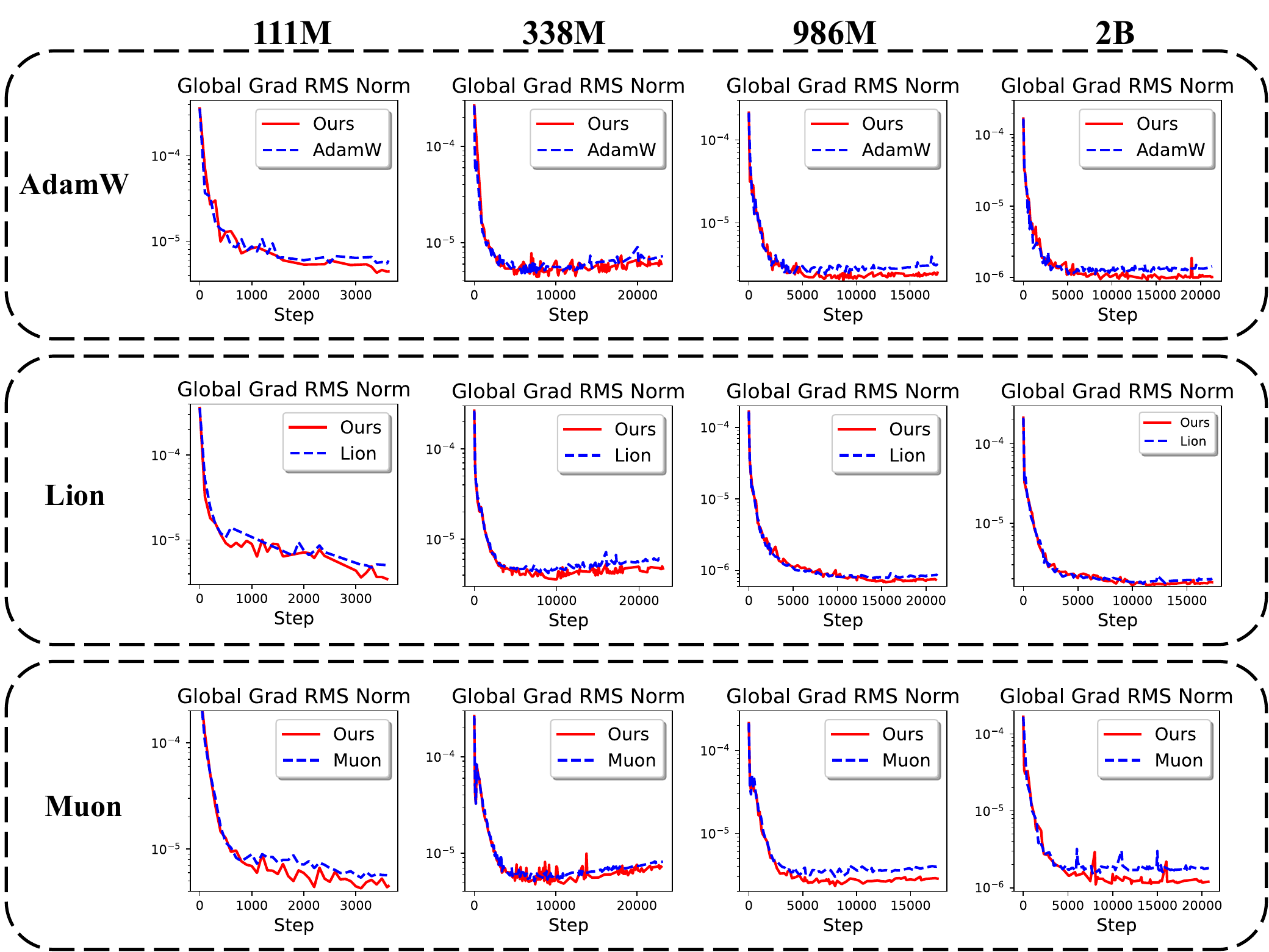}
  \caption{Comparison of gradient norms using RMS normalization across four model sizes: 111M, 338M, 986M, and 2B. All models are trained under Chinchilla settings. \Meth{} achieves lower gradient norms across all configurations.}
  \label{fig:grad_norm}
\end{figure}

\begin{figure}[t]
  \centering
  \includegraphics[width=\linewidth]{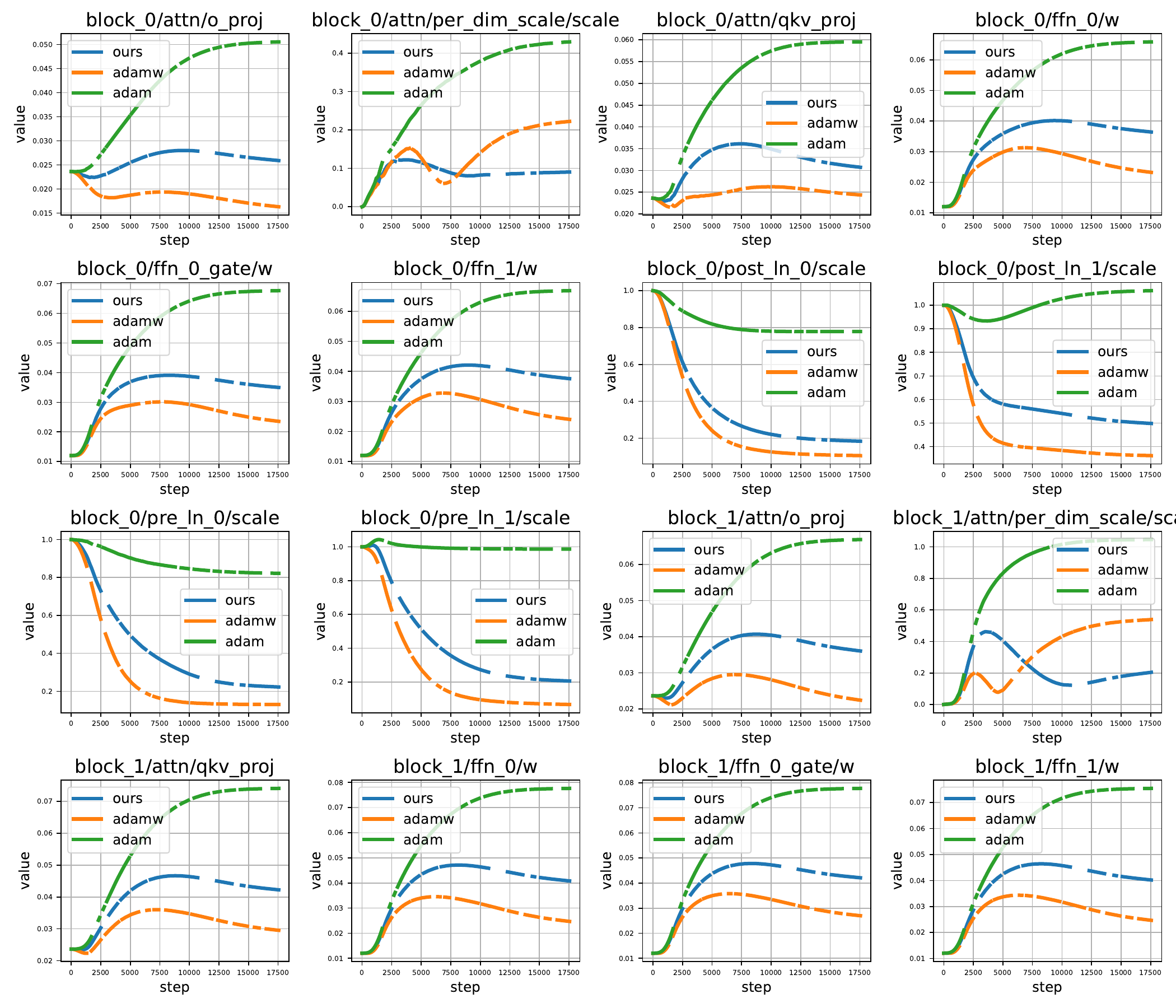}
  \caption{Evolution of parameter norm (RMS) during training for a 986M parameter model. We compare three optimization strategies: \adamw{} with weight decay 0.1 (orange), our proposed method (blue), and \adam{} without weight decay (green). Our method maintains stable parameter norms comparable to \adamw{} while achieving improved performance.}
  \label{fig:param_norm}
\end{figure}

\end{document}